\documentclass[letterpaper]{article} 

\ifdefined\aaaianonymous
    \usepackage[submission]{aaai2027}  
\else
    \usepackage{aaai2027}              
\fi
\nocopyright  

\usepackage{times}  
\usepackage{helvet}  
\usepackage{courier}  
\usepackage[hyphens]{url}  
\usepackage{graphicx} 
\usepackage{natbib}  
\usepackage{caption} 
\usepackage{amsmath}
\usepackage{amssymb}
\usepackage{amsthm}
\usepackage{booktabs}

\newtheorem{proposition}{Proposition}
\newtheorem{lemma}{Lemma}
\newtheorem{corollary}{Corollary}
\theoremstyle{remark}
\newtheorem{remark}{Remark}

\newcommand{\E}{\mathbb{E}}

\newcommand{\Prob}{\mathbb{P}}
\newcommand{\Var}{\operatorname{Var}}
\newcommand{\sign}{\operatorname{sign}}
\newcommand{\grad}{\nabla f}
\newcommand{\cf}{\mathrm{cf}}

\title{Same Loss, Same Noise, Opposite Schedules:\\
Noise Structure and Optimizer Normalization Jointly\\
Determine Whether Learning-Rate Cooldown Helps}

\author{
    Subham Singh\textsuperscript{\rm 1},
    Ashutosh Mishra\textsuperscript{\rm 2},
    Subha Raut\textsuperscript{\rm 1}
}
\affiliations{
    \textsuperscript{\rm 1}Department of Mathematics and Statistics, Mississippi State University\\
    \textsuperscript{\rm 2}Independent Researcher\\
    ss4707@msstate.edu, ashutoshm1771@gmail.com, sr2120@msstate.edu
}

\begin{document}

\maketitle

\begin{abstract}
The cooldown phase of a warmup--stable--decay (WSD) learning-rate schedule, now a default in large-model pretraining, lowers the final training loss in some settings and does nothing in others. We give a provable account of which case obtains, and it turns on two properties together: the structure of the gradient noise and whether the optimizer normalizes its update. On a strongly convex objective with multiplicative (gradient-proportional) noise, stochastic gradient descent contracts geometrically at a constant learning rate, so cooldown has nothing to improve. Under the same objective and noise, sign-based and normalized methods, the standard surrogates for adaptive optimizers, settle on a noise floor of order $\eta^2$ and reach the minimizer only as the learning rate is driven to zero; any additive noise then reinstates a floor for every method. The mechanism is elementary: an SGD step shrinks in proportion to the gradient and so anneals itself, whereas a normalized step keeps unit scale and cannot. We solve the signSGD stationary law on the quadratic exactly and obtain the floor constant in closed form, prove a local form of the dissociation under $(L_0,L_1)$-smoothness, extend the floor to normalized SGD in dimension $d>1$ by a scale-invariance argument, and establish robustness to momentum and heavy-tailed noise. Simulation confirms every prediction, and we demonstrate the resulting noise-regime diagnostic on a real classification task with directly measured gradient noise. The mechanism explains \emph{whether} cooldown helps; the \emph{interior} cooldown fraction used at scale lies outside stationary landscape-and-noise geometry.
\end{abstract}

\section{Introduction}

The learning-rate schedule is among the most consequential and least understood choices in neural network training. Warmup--stable--decay (WSD) schedules hold the rate constant for most of training and then cool it down to a small final value \citep{hu2024minicpm}. They have become a default for large language model pretraining, where they match or exceed cosine schedules while letting training resume from any stable-phase checkpoint \citep{haegele2024}, and their qualitative dynamics are strikingly consistent across architectures \citep{belloni2026}. What remains missing is a theoretical account of \emph{when} the cooldown phase actually helps.

Part of the difficulty is that cooldown is a last-iterate effect: it lowers the loss of the final iterate without improving the best loss seen so far. It therefore leaves no trace in the min-over-iterates and randomly-sampled-iterate guarantees that dominate nonconvex optimization theory \citep{compagnoni2026,shulgin2026}, and even sharp last-iterate analyses of SGD \citep{shamir2013} concern the additive-noise regime. Convex last-iterate bounds do reproduce the shape of practical schedules remarkably well \citep{schaipp2025}, but they too treat (sub)gradient methods under additive-type noise. Nothing in either line of work predicts that changing the \emph{optimizer}, with the loss and the noise held fixed, should change whether cooldown helps at all.

This paper shows that it does. We do not try to resolve the WSD cooldown fraction observed at scale. We isolate a narrower question that admits sharp answers: with the loss and the noise held fixed, does the optimizer alone decide whether cooldown helps? It does, and the deciding property is whether the magnitude of the update shrinks as the gradient shrinks.

Our setting is the affine-variance noise model of \citet[Def.~3.2]{compagnoni2026}, which bounds the gradient-noise variance by $\sigma_0^2+\sigma_1^2\|\grad\|^2$. The $\sigma_0^2$ term is the usual bounded additive noise. The $\sigma_1^2$ term scales with the gradient and vanishes at the optimum, as in the interpolation regime of over-parameterized models \citep{schmidt2013,vaswani2019}. Our contributions are as follows.

\begin{itemize}
\item \textbf{SGD self-anneals under multiplicative noise.} With $\sigma_0=0$ the second moment obeys an exactly multiplicative recursion with no additive term (Prop.~\ref{prop:sgd}): the optimal fixed-horizon schedule is the constant rate $\eta^\star=1/(\lambda(1+\sigma_1^2))$, convergence is geometric, and the optimal cooldown fraction is $0$. This half restates classical interpolation results in schedule language, and we claim no novelty for it.
\item \textbf{Normalized methods cannot self-anneal.} Under the same noise, the probability that signSGD takes a correct step is independent of the iterate, for \emph{any} noise law (Lemma~\ref{lem:sign}). On the quadratic the induced dynamics form an explicitly solvable lattice walk: the last-iterate loss is floored at $\Theta(\eta^2)$, with the floor constant in closed form (Prop.~\ref{prop:sign}, Remark~\ref{rem:const}). Cooldown is necessary. Any additive component ($\sigma_0>0$) restores an $\eta$-dependent floor for every method (Prop.~\ref{prop:additive}).
\item \textbf{Robustness.} A local version of the dissociation holds under $(L_0,L_1)$-smoothness (Sec.~\ref{sec:l0l1}); normalized SGD in $d>1$ has a dimension-free granularity floor, with a scale-invariance argument for why it cannot anneal (Sec.~\ref{sec:highdim}); the classification is inherited under momentum (Sec.~\ref{sec:momentum}) and moment-free under heavy tails (Sec.~\ref{sec:heavytail}).
\item \textbf{Verification and scope.} Every quantitative prediction is checked against simulation, including WSD sweeps on a nonconvex deep linear network, and the noise structure and schedule preferences are measured directly on a real classification task (Sec.~\ref{sec:exp}, Figs.~\ref{fig:diss} and~\ref{fig:mlp}). We state plainly what the mechanism does not explain (Sec.~\ref{sec:scope}).
\end{itemize}

\begin{figure*}[t]
\centering
\includegraphics[width=0.90\textwidth]{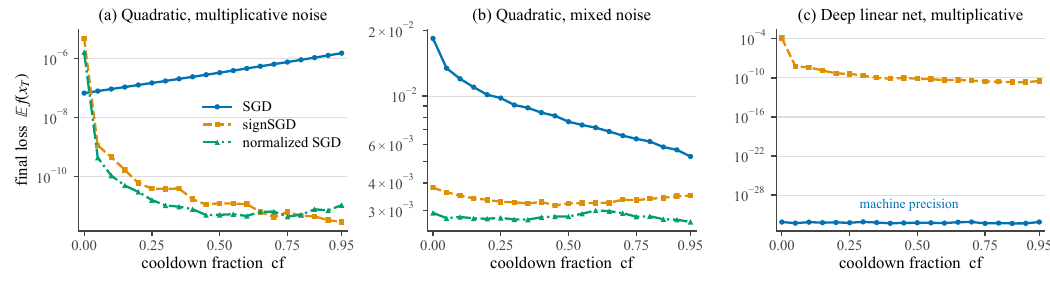}
\caption{Last-iterate loss versus cooldown fraction under WSD schedules, with the peak rate tuned separately for every point (mean over $512$ runs for (a)--(b), $8$ runs for (c)). (a)~Anisotropic quadratic ($d=20$), pure multiplicative noise: cooldown strictly hurts SGD and improves signSGD and normalized SGD by about six orders of magnitude. (b)~Same quadratic with an added additive component ($\sigma_0=0.2$): every method now benefits from cooldown. (c)~Deep linear network with a planted teacher, multiplicative noise: SGD sits at numerical zero ($\approx10^{-32}$) for every cooldown fraction while signSGD requires strong cooldown.}
\label{fig:diss}
\end{figure*}

\section{Setup and Noise Model}\label{sec:setup}

We minimize the separable quadratic $f(x)=\tfrac12\sum_{i=1}^d\lambda_i x_i^2$ with $\lambda_i>0$; Section~\ref{sec:l0l1} treats an $(L_0,L_1)$-smooth extension. Gradients are observed with noise, $\hat g_t=\grad(x_t)+\xi_t$. Following \citet[Def.~3.2]{compagnoni2026}, who bound the gradient-noise variance by $\sigma_0^2+\sigma_1^2\|\grad\|^2$, we work with the coordinatewise instance of that model in which the bound is met exactly:
\begin{equation}\label{eq:noise}
\xi_i=\sigma_0 Z^0_i+\sigma_1\,|\partial_i f(x)|\,Z_i ,
\end{equation}
where the joint law of $(Z^0,Z)$ does not depend on $x$ and, whenever the moments exist, $\E[Z_i]=\E[Z^0_i]=0$ and $\E[Z_i^2]=\E[(Z^0_i)^2]=1$. ``Multiplicative'' means $\sigma_0=0$. In that case the noise is a \emph{scale family}: the standardized noise $Z$ is iterate-independent, and the noise vanishes at the optimum. This structural property, not Gaussianity, is what our results use; Lemma~\ref{lem:sign} and the floors below hold for arbitrary $Z$, including laws without any finite moments (Sec.~\ref{sec:heavytail}).

We study the last-iterate loss $\E[f(x_T)]$ under WSD schedules: $\eta_t=\eta$ for $t<(1-\cf)T$, after which $\eta_t$ decays linearly to $0$, with peak rate $\eta$ and cooldown fraction $\cf\in[0,1]$.

\section{SGD Self-Anneals under Multiplicative Noise}\label{sec:sgd}

\begin{proposition}[SGD, pure multiplicative noise]\label{prop:sgd}
Consider one coordinate with curvature $\lambda$ and $\sigma_0=0$, and write $m_t:=\E[x_t^2]$. Then
\begin{equation}\label{eq:sgdrec}
m_{t+1}=r(\eta_t)\,m_t,\qquad
m_T=m_0\textstyle\prod_{t<T}r(\eta_t),
\end{equation}
with $r(\eta)=(1-\eta\lambda)^2+\eta^2\sigma_1^2\lambda^2$, and:
(i) $r$ is minimized at $\eta^\star=1/(\lambda(1+\sigma_1^2))$, where $r(\eta^\star)=\sigma_1^2/(1+\sigma_1^2)<1$;
(ii) over schedules $\{\eta_t\in[0,\eta_{\max}]\}_{t<T}$ with $T$ fixed, $m_T$ is minimized by the constant schedule $\eta_t\equiv\min(\eta^\star,\eta_{\max})$, so the optimal cooldown fraction within the WSD family is $0$;
(iii) at $\eta^\star$, $\E[f(x_T)]=f(x_0)\big(\sigma_1^2/(1+\sigma_1^2)\big)^T\to0$ geometrically, with no noise floor. In $d$ dimensions every coordinate contracts geometrically at any constant rate with $\max_i r_i(\eta)<1$, so the loss again decays geometrically to zero.
\end{proposition}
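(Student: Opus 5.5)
The plan is to derive the one-step second-moment recursion directly from the update and then treat (i)--(iii) as elementary consequences of its product form. For one coordinate with $\sigma_0=0$, the SGD step is $x_{t+1}=x_t-\eta_t(\lambda x_t+\sigma_1\lambda|x_t|Z_t)=(1-\eta_t\lambda)x_t-\eta_t\sigma_1\lambda|x_t|Z_t$. The schedule $\{\eta_t\}$ is deterministic, and $Z_t$ is independent of $x_t$ with $\E[Z_t]=0$ and $\E[Z_t^2]=1$. Squaring and conditioning on $x_t$, the cross term vanishes and $|x_t|^2=x_t^2$, so
\[
\E[x_{t+1}^2\mid x_t]=\bigl((1-\eta_t\lambda)^2+\eta_t^2\sigma_1^2\lambda^2\bigr)x_t^2 .
\]
Taking expectations gives $m_{t+1}=r(\eta_t)m_t$, and unrolling gives the product formula \eqref{eq:sgdrec}. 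The one point needing care is that $Z_t$ is fresh at each step, i.e.\ independent of the past; this is implicit in the noise model, and I will state it explicitly.

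For (i), $r$ is a convex quadratic in $\eta$, $r(\eta)=1-2\eta\lambda+\eta^2\lambda^2(1+\sigma_1^2)$. Setting $r'(\eta)=0$ gives $\eta^\star=1/(\lambda(1+\sigma_1^2))$. Substituting back,
\[
r(\eta^\star)=1-\frac{1}{1+\sigma_1^2}=\frac{\sigma_1^2}{1+\sigma_1^2},
\]
which is less than $1$.

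For (ii), $m_T=m_0\prod_t r(\eta_t)$ is a product of nonnegative factors, each depending on a single $\eta_t$. Minimizing the product therefore decouples into minimizing each factor over $[0,\eta_{\max}]$. Because $r$ is convex with unconstrained minimizer $\eta^\star$, the constrained minimizer of each factor is $\min(\eta^\star,\eta_{\max})$. Hence the constant schedule is optimal among all schedules, and in particular within the WSD family.

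The only subtlety is uniqueness, which is needed to say the optimal cooldown fraction is exactly $0$. For this I will argue as follows. A WSD schedule with $\cf>0$ and peak $\eta\le\eta_{\max}$ has at least one step whose rate lies strictly below $\eta$. If $\eta\neq\min(\eta^\star,\eta_{\max})$, then the cooldown-$0$ schedule with peak $\min(\eta^\star,\eta_{\max})$ already does strictly better. If $\eta=\min(\eta^\star,\eta_{\max})$, then the decayed steps use rates strictly below the unique per-factor minimizer, so $r$ is strictly larger on those steps. This holds as long as $m_0>0$. The case where $\cf$ is so small that no step actually decays is a degenerate tie, and I will note it.

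For (iii), $f(x_T)=\tfrac{\lambda}{2}x_T^2$, so $\E[f(x_T)]=\tfrac{\lambda}{2}m_T=f(x_0)\,r(\eta^\star)^T$, taking $x_0$ deterministic (otherwise the same holds with $\E[f(x_0)]$). This tends to $0$ geometrically. In $d$ dimensions, the coordinatewise noise \eqref{eq:noise} makes each coordinate evolve with its own $\lambda_i$ and $Z_{t,i}$. The derivation above needs only $\E[Z_{t,i}]=0$, $\E[Z_{t,i}^2]=1$, and independence of $Z_t$ from $x_t$; it does not need independence across coordinates. So $\E[x_{t+1,i}^2]=r_i(\eta)\E[x_{t,i}^2]$ for each $i$, and therefore
\[
\E[f(x_T)]=\sum_i\tfrac{\lambda_i}{2}r_i(\eta)^T\E[x_{0,i}^2]\le \Bigl(\max_i r_i(\eta)\Bigr)^T\E[f(x_0)],
\]
which is geometric whenever $\max_i r_i(\eta)<1$. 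Such an $\eta$ exists: each $r_i(\eta)=1-2\eta\lambda_i+O(\eta^2)$, so every $r_i(\eta)<1$ for small $\eta>0$.

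I do not expect a real obstacle here. The step needing the most care is the strictness claim in (ii); everything else is a direct computation.
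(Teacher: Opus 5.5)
Your proposal is correct and matches the paper's proof essentially step for step. The paper also conditions on $x_t$ to get $m_{t+1}=r(\eta_t)m_t$, finds $\eta^\star$ from $r'(\eta)=0$, minimizes $\log m_T=\log m_0+\sum_t\log r(\eta_t)$ term by term using that $r$ is decreasing on $[0,\eta^\star]$, and sums $\tfrac12\lambda_i\E[x_{i,T}^2]$ coordinatewise for (iii). Your strictness argument for $\cf^\star=0$ goes slightly beyond what the paper proves, since the paper only exhibits the constant schedule as a minimizer. Note that the strictness step also needs every factor to satisfy $r>0$, which holds when $\sigma_1>0$, so that one strictly larger factor gives a strictly larger product.
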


\begin{proof}
From $\hat g_t=\lambda x_t+\xi_t$ we get $x_{t+1}=(1-\eta_t\lambda)x_t-\eta_t\xi_t$; conditioning on $x_t$ and using $\E[\xi_t\mid x_t]=0$, $\E[\xi_t^2\mid x_t]=\sigma_1^2\lambda^2x_t^2$ gives \eqref{eq:sgdrec}. For (i), $r'(\eta)=-2\lambda+2\eta\lambda^2(1+\sigma_1^2)$ vanishes at $\eta^\star$, and substituting $1-\eta^\star\lambda=\sigma_1^2/(1+\sigma_1^2)$ gives $r(\eta^\star)$. For (ii), $\log m_T=\log m_0+\sum_t\log r(\eta_t)$ is minimized term by term, and $r$ is decreasing on $[0,\eta^\star]$. Part (iii) applies \eqref{eq:sgdrec} coordinatewise and sums $\tfrac12\lambda_i\E[x_{i,T}^2]$.
\end{proof}

The noise contribution $\eta_t^2\sigma_1^2\lambda^2m_t$ is proportional to $m_t$, so it forms part of the geometric contraction rather than an additive floor. Near the optimum the SGD step $\eta_t\hat g_t\propto\eta_t x_t$ shrinks on its own, and the schedule has nothing further to contribute.

\begin{remark}[Constraint set for the schedule optimization]\label{rem:constraint}
Part (ii) optimizes the fixed-horizon last-iterate loss with no budget constraint on $\sum_t\eta_t$; on exactly this set the objective is minimized termwise, hence by a constant schedule. A budget on $\sum_t\eta_t$ would change the problem. Warmup plays no role here; it is needed only for stability under $(L_0,L_1)$ curvature (Sec.~\ref{sec:l0l1}). Proposition~\ref{prop:sgd} itself restates the linear convergence of constant-step SGD under interpolation \citep{schmidt2013,vaswani2019} in schedule language; it is the baseline for the contrast that follows, not a contribution.
\end{remark}

\section{Normalized Methods Cannot Self-Anneal}\label{sec:sign}

\begin{lemma}[Iterate-independent sign reliability]\label{lem:sign}
Under scale-family multiplicative noise on one coordinate,
\begin{equation}
\Prob\big(\sign(\hat g)=\sign(x)\big)=
\begin{cases}
\Prob(Z>-1/\sigma_1), & x>0,\\[2pt]
\Prob(Z<\phantom{-}1/\sigma_1), & x<0,
\end{cases}
\end{equation}
independent of $|x|$ for every law of $Z$. If $Z$ is symmetric the two values coincide; call them $p$. Gaussian $Z$ gives $p=\Phi(1/\sigma_1)$, and in general $p>\tfrac12$ if and only if $\Prob(Z\le-1/\sigma_1)<\tfrac12$, which holds in particular whenever $Z$ has median zero and positive mass on $(-1/\sigma_1,0]$.
\end{lemma}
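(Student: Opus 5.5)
The computation is direct; I would substitute the noise model into the gradient and factor out the gradient's magnitude. On one coordinate with $\sigma_0=0$, the model \eqref{eq:noise} gives $\hat g=\lambda x+\sigma_1\lambda|x|Z$. For $x>0$ this is $\hat g=\lambda x(1+\sigma_1 Z)$. Since $\lambda x>0$, we have $\sign(\hat g)=\sign(1+\sigma_1Z)$, and the event $\{\sign(\hat g)=\sign(x)\}$ is exactly $\{Z>-1/\sigma_1\}$. For $x<0$ we have $|x|=-x$, so $\hat g=\lambda x(1-\sigma_1 Z)$ with $\lambda x<0$. The correct-sign event is then $\{\hat g<0\}=\{1-\sigma_1Z>0\}=\{Z<1/\sigma_1\}$. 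Both events depend only on $Z$, and the law of $Z$ does not depend on $x$ (the scale-family assumption of Section~\ref{sec:setup}). This gives independence of $|x|$ for every law of $Z$. Note that no moments of $Z$ are used anywhere.

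Next I would treat the symmetric case. If $Z\overset{d}{=}-Z$, then $\Prob(Z<1/\sigma_1)=\Prob(-Z<1/\sigma_1)=\Prob(Z>-1/\sigma_1)$, so the two values agree; call the common value $p$. For Gaussian $Z$ this gives $p=\Prob(Z>-1/\sigma_1)=\Phi(1/\sigma_1)$.

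For the characterization, $p=\Prob(Z>-1/\sigma_1)=1-\Prob(Z\le-1/\sigma_1)$, so $p>\tfrac12$ if and only if $\Prob(Z\le -1/\sigma_1)<\tfrac12$. For the sufficient condition, median zero means $\Prob(Z\le 0)\ge\tfrac12$. Splitting this as $\Prob(Z\le-1/\sigma_1)+\Prob(-1/\sigma_1<Z\le0)$ and using positive mass on $(-1/\sigma_1,0]$ gives $\Prob(Z\le-1/\sigma_1)\le\Prob(Z\le 0)-\Prob(-1/\sigma_1<Z\le0)$. This is strictly less than $\tfrac12$ provided $\Prob(Z\le0)=\tfrac12$ exactly, for instance when $Z$ is symmetric with no atom at $0$. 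An atom of $Z$ at $0$ lies in the window and only helps.

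The main obstacle is bookkeeping, not analysis. The condition in the statement must be read in the sense $\Prob(Z<0)\le\tfrac12$, i.e.\ median zero with the lower side carrying at most half the mass. Then $\Prob(Z\le-1/\sigma_1)\le\Prob(Z<0)-\Prob(-1/\sigma_1<Z<0)$. If the positive mass sits only at $0$, it instead yields $\Prob(Z\le-1/\sigma_1)\le\Prob(Z<0)\le\tfrac12$, and strictness comes from the atom. I would spell out these cases explicitly. A second minor point is the convention $\sign(0)$ on the null event $\{1+\sigma_1 Z=0\}$, which I would place on the incorrect side, consistent with the strict inequalities in the statement.
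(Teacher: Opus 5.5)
Your argument for the main claim is the paper's: factor the gradient magnitude out of $\hat g$, so that the correct-sign event is $\{Z>-1/\sigma_1\}$ or $\{Z<1/\sigma_1\}$, which involves $x$ only through its sign. You are slightly more careful than the paper in two places. You write out the $x<0$ case with $|x|=-x$, and you let $\sign(0)=0$ count as an incorrect step, which makes the strict inequalities exact even for laws with an atom at $\mp1/\sigma_1$. The symmetric, Gaussian, and ``if and only if'' parts are also fine.

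The step that fails is the atom case in your last paragraph. Read median zero as $\Prob(Z<0)\le\tfrac12$, and suppose the only mass in $(-1/\sigma_1,0]$ is an atom at $0$. Then $\Prob(Z\le-1/\sigma_1)=\Prob(Z<0)\le\tfrac12$. The atom cannot make this strict, because it is not counted in $\Prob(Z<0)$.

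Concretely, let $Z=-a$ with probability $\tfrac12$, $Z=0$ with probability $\epsilon\in(0,\tfrac12)$, and $Z=a/(1-2\epsilon)$ otherwise. Choose $a^2=(1-2\epsilon)/(1-\epsilon)$, which gives $\E[Z]=0$ and $\E[Z^2]=1$, and take $\sigma_1\ge1/a$. Then $0$ is a median, since $\Prob(Z\le0)\ge\tfrac12$ and $\Prob(Z\ge0)=\tfrac12$, and the window has mass $\epsilon>0$. Yet $\Prob(Z>-1/\sigma_1)=\tfrac12$.

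So with the half-closed window and the usual notion of median, the clause is false as written, and no case analysis rescues it. There are two ways to repair it:
\begin{itemize}
\item Put the mass condition on the open interval $(-1/\sigma_1,0)$. Then $\Prob(Z\le-1/\sigma_1)=\Prob(Z<0)-\Prob(-1/\sigma_1<Z<0)<\tfrac12$ at once.
\item Read ``median zero'' as $\inf\{m:\Prob(Z\le m)\ge\tfrac12\}=0$. Then the strict bound follows from the median condition alone, and the mass condition is superfluous.
\end{itemize}

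Your third paragraph has a related slip. $\Prob(Z\le0)\ge\tfrac12$ is a lower bound and cannot give the upper bound you need, which is why you had to add the hypothesis $\Prob(Z\le0)=\tfrac12$.

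The paper's one-line proof does not touch this side clause. The parts the rest of the paper uses, iterate-independence and the value of $p$, you prove correctly.
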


\begin{proof}
For $x>0$, $\hat g=\lambda x(1+\sigma_1 Z)$, so $\sign(\hat g)=\sign(1+\sigma_1Z)$; the case $x<0$ is analogous. (Ties $1+\sigma_1Z=0$ have probability zero for continuous $Z$.)
\end{proof}

The probability of a correct, restoring step does not improve as the iterate approaches the optimum. Taking the sign discards precisely the gradient-magnitude information that produces SGD's self-annealing.

\begin{proposition}[signSGD floor]\label{prop:sign}
Run signSGD, $x_{t+1}=x_t-\eta\,\sign(\hat g_t)$, with constant $\eta$ on one coordinate with curvature $\lambda>0$, pure multiplicative noise with symmetric $Z$, and $x_0=(k+\varphi)\eta\notin\eta\mathbb Z$ with offset $\varphi\in(0,1)$ (any continuous initialization satisfies this almost surely). Then:

(i) \emph{(Granularity.)} Every step changes $x$ by exactly $\pm\eta$, so $\max(|x_t|,|x_{t+1}|)\ge\eta/2$ for all $t$, and
\[
\E[f(x_T)]+\E[f(x_{T+1})]\;\ge\;\tfrac18\lambda\eta^2
\quad\text{for every }T,
\]
hence $\limsup_T\E[f(x_T)]\ge\tfrac1{16}\lambda\eta^2$: the loss cannot converge to zero at constant $\eta$.

(ii) \emph{(Exact floor.)} $|x_t|$ is a reflected random walk on the lattice $\{(j+\varphi)\eta\}_{j\ge0}\cup\{(j+1-\varphi)\eta\}_{j\ge0}$ with iterate-independent inward probability $p$ (Lemma~\ref{lem:sign}). For $p\in(\tfrac12,1]$ its invariant law is explicit, and under it, with $q:=1-p$,
\begin{equation}\label{eq:cexact}
\begin{gathered}
\E[x^2]=c(\varphi,p)\,\eta^2,\\
c(\varphi,p)=\frac{\varphi^2+(1-\varphi)^2}{2}+\frac{q}{2p-1}+\frac{q}{(2p-1)^2},
\end{gathered}
\end{equation}
so the stationary loss is $\tfrac12\lambda\,c(\varphi,p)\,\eta^2=\Theta(\eta^2)$. Driving the loss to zero requires $\eta_t\to0$.
\end{proposition}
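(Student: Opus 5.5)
The plan is to reduce everything to an integer-indexed walk and then read off both parts. Since signSGD moves by exactly $\pm\eta$ and $x_0=(k+\varphi)\eta$, every iterate has the form $x_t=(m_t+\varphi)\eta$ with $m_t\in\mathbb Z$. Because $\varphi\in(0,1)$, no iterate ever hits $0$, so $\sign(x_t)$ is always defined. By Lemma~\ref{lem:sign}, with $Z$ symmetric, the step points toward the origin with probability $p$ and away with probability $q=1-p$. Conditional on the past, this happens independently of $|x_t|$, because the fresh noise $Z_t$ is independent of $x_t$.

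For part~(i) I will not use any probability. If $|x_t|<\eta/2$ and $|x_{t+1}|<\eta/2$ both held, then $|x_{t+1}-x_t|<\eta$, which contradicts $|x_{t+1}-x_t|=\eta$. Hence $\max(|x_t|,|x_{t+1}|)\ge\eta/2$ pointwise. Therefore $f(x_T)+f(x_{T+1})\ge\tfrac12\lambda(\eta/2)^2=\tfrac18\lambda\eta^2$ almost surely, and taking expectations gives the bound. For every $T$, at least one of $\E f(x_T)$ and $\E f(x_{T+1})$ is then $\ge\tfrac1{16}\lambda\eta^2$, so the $\limsup$ claim follows.

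For part~(ii) I will work with the chain $m_t$ on $\mathbb Z$.
\begin{itemize}
\item For $m\ge0$ we have $x>0$, so the chain moves $m\to m-1$ with probability $p$ and $m\to m+1$ with probability $q$.
\item For $m\le-1$ we have $x<0$, so it moves $m\to m+1$ with probability $p$ and $m\to m-1$ with probability $q$.
\end{itemize}
Writing $m\ge0$ as $j=m$ and $m\le-1$ as $j=-m-1$ gives $|x|=(j+\varphi)\eta$ and $|x|=(j+1-\varphi)\eta$ respectively. This is exactly the reflected lattice walk in the statement, with inward probability $p$.

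The chain is birth--death, so I will look for a reversible invariant law. Detailed balance on $m\ge0$ gives $\pi(m+1)p=\pi(m)q$, so $\pi(m)=\pi(0)\rho^m$ with $\rho=q/p<1$ whenever $p>\tfrac12$. Symmetrically, $\pi(-1-j)=\pi(-1)\rho^j$. Across the central edge, $\pi(-1)p=\pi(0)p$, so $\pi(-1)=\pi(0)$. Normalizing gives $\pi(0)=(1-\rho)/2$. The case $p=1$ ($\rho=0$) is the deterministic oscillation between $m=0$ and $m=-1$, which the same formula covers. Under this law $j$ is geometric with $\Prob(j)=(1-\rho)\rho^j$, and
\[
\E[x^2]/\eta^2=\E\big[\tfrac12\big((j+\varphi)^2+(j+1-\varphi)^2\big)\big]=\E[j^2]+\E[j]+\tfrac{\varphi^2+(1-\varphi)^2}{2}.
\]
I will then substitute $\E j=\rho/(1-\rho)=q/(2p-1)$ and $\E j^2+\E j$ minus that, namely $\rho(1+\rho)/(1-\rho)^2=q/(2p-1)^2$, using $p+q=1$. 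This yields \eqref{eq:cexact}. Since $c\ge\tfrac14$, the stationary loss is $\Theta(\eta^2)$, and combined with~(i) the loss can vanish only if $\eta_t\to0$.

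The main obstacle is bookkeeping rather than analysis. I have to get the reflection at the boundary right: the central edge between $m=-1$ and $m=0$ is traversed with probability $p$ in both directions, which is what makes the two half-lattices carry equal mass. I also have to be careful about convergence. The chain has period~$2$, so I will claim only the invariant law, as the statement does, plus convergence of Cesàro or two-step averages via positive recurrence. The lower bound in~(i) already covers every $T$ without any mixing argument.
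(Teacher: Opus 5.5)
Your proposal is correct and follows essentially the paper's argument: a pointwise granularity bound for (i), then detailed balance for (ii) on two geometric half-lattices joined by a single central edge traversed with probability $p$ in both directions, then summing $x^2$ against the geometric weights to get \eqref{eq:cexact}; your $\mathbb Z$-indexed chain $m_t$ is the paper's $A_j/B_j$ ladder pair relabeled ($A_j\leftrightarrow m=j$, $B_j\leftrightarrow m=-1-j$), and your geometric-moment calculation is the same as the paper's geometric sums. As in the paper, the claim that every step is exactly $\pm\eta$ tacitly needs $\Prob(1+\sigma_1 Z=0)=0$, which is automatic for continuous $Z$.
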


\begin{proof}[Proof sketch]
Since $x_0\notin\eta\mathbb Z$, the iterate never hits $0$, $\hat g_t\neq0$ almost surely, and each step is exactly $\pm\eta$; if $|x_t|<\eta/2$ then $|x_{t+1}|=\eta-|x_t|>\eta/2$, which gives $x_t^2+x_{t+1}^2\ge\eta^2/4$ pointwise and hence (i). For (ii), Lemma~\ref{lem:sign} makes the inward/outward indicators i.i.d.\ Bernoulli($p$), so $|x_t|$ is a birth--death chain on two half-lattices joined at the bottom (the walk crosses $0$ between $\varphi\eta$ and $(1-\varphi)\eta$). A chain on a tree is reversible; detailed balance gives geometric level weights proportional to $(q/p)^{j}$ on both half-lattices with equal weight at the two bottom states, and summing $x^2$ against these weights yields \eqref{eq:cexact}; the full computation is in the technical appendix.
\end{proof}

\begin{corollary}[Coordinatewise signSGD, $d\ge1$]\label{cor:signd}
On the separable quadratic with $x_{0,i}\notin\eta\mathbb Z$ for every $i$, coordinatewise signSGD moves each coordinate by exactly $\pm\eta$ per step, so Proposition~\ref{prop:sign}(i) applies coordinatewise and $\E[f(x_T)]+\E[f(x_{T+1})]\ge\tfrac{\eta^2}{8}\sum_{i=1}^d\lambda_i$ for every $T$. The floor grows with the trace of the curvature and holds for any noise law.
\end{corollary}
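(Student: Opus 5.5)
The plan is to reduce the corollary to the one-dimensional granularity argument of Proposition~\ref{prop:sign}(i), applied separately in each coordinate. Coordinatewise signSGD updates each coordinate by its own sign:
\[
x_{t+1,i}=x_{t,i}-\eta\,\sign(\hat g_{t,i}),
\qquad
\hat g_{t,i}=\lambda_i x_{t,i}+\xi_{t,i}.
\]
So the $i$-th coordinate evolves by steps of $\pm\eta$, whatever the noise does in the other coordinates.

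First I check that each step has size exactly $\eta$, never $0$. Since $x_{0,i}\notin\eta\mathbb Z$ and every increment lies in $\eta\mathbb Z$, we get $x_{t,i}\in x_{0,i}+\eta\mathbb Z$ for all $t$. Hence $x_{t,i}\neq 0$ deterministically. With $\sigma_0=0$, $\hat g_{t,i}=\lambda_i x_{t,i}(1+\sigma_1 Z_{t,i})$, which vanishes only on the tie event $\{1+\sigma_1 Z_{t,i}=0\}$. This is the one delicate point, because the corollary says ``any noise law''. I would handle it in one of two ways: adopt the convention $\sign(0)\in\{\pm1\}$, or note that a zero step would leave $x_{t+1,i}=x_{t,i}$. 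In the latter case the granularity bound still holds, since $|x_{t,i}|\ge\eta/2$ or else the next iterate has modulus $\eta-|x_{t,i}|>\eta/2$. More precisely, the argument only needs $x_{t+1,i}-x_{t,i}\in\{-\eta,0,\eta\}$ and $x_{t,i}\notin\eta\mathbb Z$. If $|x_{t,i}|<\eta/2$, then a $0$ step leaves $|x_{t+1,i}|=|x_{t,i}|$, so that case must be excluded or treated separately. I therefore expect to state the result under the convention that the sign is never $0$, which holds almost surely for continuous $Z$. With additive noise, the event $\hat g=0$ also has probability zero for continuous laws.

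Given exact $\pm\eta$ steps, the pointwise bound from Proposition~\ref{prop:sign}(i) applies coordinatewise. If $|x_{t,i}|<\eta/2$, then $x_{t+1,i}$ lies on the other side of the origin at distance $\eta-|x_{t,i}|>\eta/2$. Hence
\[
x_{T,i}^2+x_{T+1,i}^2\ge\frac{\eta^2}{4}
\]
pointwise, for every $i$ and $T$.

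Multiplying by $\tfrac12\lambda_i$, summing over $i$, and taking expectations gives
\[
\E[f(x_T)]+\E[f(x_{T+1})]\ge\frac{\eta^2}{8}\sum_i\lambda_i=\frac{\eta^2}{8}\operatorname{tr}(\nabla^2 f).
\]
No distributional property of $\xi$ is used beyond the step structure, which justifies ``any noise law''.
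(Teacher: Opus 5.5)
Your proposal is correct and is the paper's argument. Lattice confinement $x_{t,i}\in x_{0,i}+\eta\mathbb Z$ gives exact $\pm\eta$ steps whenever $\hat g_{t,i}\neq0$, hence $|x_{T+1,i}|\ge\eta-|x_{T,i}|$ and the pointwise pair bound $x_{T,i}^2+x_{T+1,i}^2\ge\eta^2/4$; weighting by $\tfrac12\lambda_i$ and summing then gives the trace floor. (An outward step from $|x_{T,i}|<\eta/2$ lands at $\eta+|x_{T,i}|$ on the same side, not across the origin, but you only use the inequality.)

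Your treatment of the tie event $\{1+\sigma_1Z_{t,i}=0\}$ is more careful than the paper's, which simply asserts $\hat g_i\neq0$ almost surely. That requires $Z$ to have no atom at $-1/\sigma_1$, or a convention $\sign(0)\neq0$, so the paper's ``any noise law'' should carry that qualification.
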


\begin{proof}
$\hat g_i=\lambda_ix_i(1+\sigma_1Z_i)\neq0$ almost surely, so each coordinate is a $\pm\eta$ walk on its own lattice; apply the pair bound of Proposition~\ref{prop:sign}(i) per coordinate and sum.
\end{proof}

\begin{remark}[What is new here]\label{rem:folklore}
That constant-step signSGD can fail to converge is not by itself new: \citet{karimireddy2019} give convex counterexamples, and \citet{bernstein2018} use decaying steps for exactly this reason. Proposition~\ref{prop:sign} sharpens the phenomenon in two ways that matter for schedules: the stationary law is solved exactly, with the floor constant in closed form, and Lemma~\ref{lem:sign} shows the obstruction is iterate-independent for every noise law. Combined with Proposition~\ref{prop:sgd}, this yields the point of the paper: under the very noise for which signSGD is floored, SGD needs no schedule at all.
\end{remark}

\begin{remark}[The floor constant]\label{rem:const}
Averaging \eqref{eq:cexact} over a uniform offset, as under continuous initialization, gives
\[
\bar c(p)=\tfrac13+\frac{q}{2p-1}+\frac{q}{(2p-1)^2},
\]
which diverges like $q/(2p-1)^2$ as $p\downarrow\tfrac12$ and decreases to $\tfrac13$ as $p\to1$; also $c(\varphi,p)\ge\tfrac14$ always. The constant is testable: simulations match it to within Monte Carlo error across the whole range (Table~\ref{tab:cp}). A drift--diffusion (small-step) approximation, by contrast, predicts $8p^2q^2/(2p-1)^2$, an $88\times$ underestimate at $p=\Phi(2)\approx0.977$: the step size equals the stationary width $\Theta(\eta)$, so there is no scale separation and the chain must be solved exactly, as \eqref{eq:cexact} does (technical appendix).
\end{remark}

\paragraph{Mechanism.}
The SGD step $\eta\hat g\propto\eta x$ shrinks with the iterate, whereas the signSGD step $\eta\,\sign(\hat g)$ keeps a fixed magnitude however close the iterate is to the optimum. Any annealing the update does not provide on its own must instead come from the schedule.

\section{The Dissociation and the Additive Boundary}\label{sec:diss}

\begin{table}[t]
\centering
\footnotesize
\caption{Optimal schedule by noise structure and optimizer.}
\label{tab:diss}
\setlength{\tabcolsep}{3pt}
\begin{tabular}{lll}
\toprule
Noise structure & SGD & signSGD / normalized \\
\midrule
Multiplicative ($\sigma_0=0$) & constant rate & cooldown \\
Additive / mixed ($\sigma_0>0$) & cooldown & cooldown \\
\bottomrule
\end{tabular}
\end{table}

\begin{proposition}[Additive regime]\label{prop:additive}
Let $\sigma_0>0$ and $0<\eta<2/(\lambda(1+\sigma_1^2))$, so that $a(\eta):=(1-\eta\lambda)^2+\eta^2\sigma_1^2\lambda^2<1$. The SGD second moment obeys the affine recursion $m_{t+1}=a(\eta)m_t+\eta^2\sigma_0^2$, whose fixed point
\begin{equation}
m_\infty=\frac{\eta^2\sigma_0^2}{1-a(\eta)}
=\frac{\eta\,\sigma_0^2}{2\lambda}\big(1+O(\eta)\big)
\end{equation}
is strictly positive and strictly increasing in $\eta$ on the admissible range. Reducing the terminal rate therefore strictly reduces the last-iterate floor: cooldown helps \emph{every} optimizer.
\end{proposition}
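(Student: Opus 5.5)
The plan is to copy the proof of Proposition~\ref{prop:sgd} but keep the additive term, and then analyze the resulting scalar fixed point.

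\textbf{Step 1: the recursion.} On one coordinate, $x_{t+1}=(1-\eta\lambda)x_t-\eta\xi_t$ with $\xi_t=\sigma_0Z^0_t+\sigma_1\lambda|x_t|Z_t$. Conditioning on $x_t$ gives $\E[\xi_t\mid x_t]=0$, which kills the cross term. It also gives
\[
\E[\xi_t^2\mid x_t]=\sigma_0^2+\sigma_1^2\lambda^2x_t^2+2\sigma_0\sigma_1\lambda|x_t|\,\E[Z^0_tZ_t].
\]
To obtain exactly $m_{t+1}=a(\eta)m_t+\eta^2\sigma_0^2$, I need the mixed term to vanish. So I will state the assumption that $Z^0$ and $Z$ are uncorrelated, for example independent.

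If they are correlated, the recursion becomes only an inequality. Cauchy--Schwarz on $|x_t|$ gives an affine upper bound, and dropping the term gives a lower bound when the correlation is nonnegative. I expect this point to be the only real subtlety. The clean statement needs uncorrelatedness, which is the natural reading of the noise model~\eqref{eq:noise}, and I will make that assumption explicit.

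\textbf{Step 2: the fixed point and the floor.} The condition $a(\eta)<1$ is equivalent to $\eta\lambda^2(1+\sigma_1^2)<2\lambda$, which is the stated range. On that range the affine map is a contraction, so $m_t\to m_\infty=\eta^2\sigma_0^2/(1-a)$. Expanding,
\[
1-a(\eta)=2\eta\lambda-\eta^2\lambda^2(1+\sigma_1^2),
\]
so
\[
m_\infty=\frac{\eta\sigma_0^2}{2\lambda-\eta\lambda^2(1+\sigma_1^2)}.
\]
This gives the $O(\eta)$ expansion directly, and positivity is immediate. For monotonicity, the numerator $\eta$ increases while the positive denominator decreases, so $m_\infty$ is strictly increasing on the admissible range.

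\textbf{Step 3: cooldown helps.} First, the floor is a genuine lower bound. Since $m_{t+1}\ge\eta_t^2\sigma_0^2$ at every step, a constant rate leaves $\E[f(x_T)]\ge\tfrac12\lambda\eta^2\sigma_0^2$ for all $T$, and $m_t\to m_\infty$. Second, because $m_\infty(\eta)$ is strictly increasing, a schedule that ends at a smaller terminal rate $\eta'$ and runs long enough at it converges to the strictly smaller $m_\infty(\eta')$.

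\textbf{Step 4: every optimizer.} For signSGD and normalized methods, the $\eta$-dependent floor already holds by Proposition~\ref{prop:sign}(i) and Corollary~\ref{cor:signd}. The granularity argument does not use the noise law, so it persists with $\sigma_0>0$. Hence reducing $\eta$ is also necessary for those methods, and the conclusion that cooldown helps every optimizer follows.
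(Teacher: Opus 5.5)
Your argument is correct and follows the paper's proof: iterate the affine map to its fixed point, factor $1-a(\eta)=\eta\lambda\bigl(2-\eta\lambda(1+\sigma_1^2)\bigr)$, and read off positivity, the $O(\eta)$ expansion, and monotonicity (the paper differentiates $m_\infty$, and your numerator-increasing, denominator-decreasing argument is equivalent). Your Step~1 caveat is a genuine sharpening that the paper leaves implicit, since the exact affine recursion needs $\E[Z^0Z]=0$, which is what ``the bound is met exactly'' in \eqref{eq:noise} amounts to; Step~4 spells out the granularity argument behind ``every optimizer'' (for normalized SGD cite Proposition~\ref{prop:normsgd}(i)), but that floor is only a lower bound, so it shows decay is necessary rather than that the floor strictly decreases in $\eta$.
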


\begin{proof}
Iterating the affine map gives $m_t=a^tm_0+\eta^2\sigma_0^2\sum_{k<t}a^k\to\eta^2\sigma_0^2/(1-a)$, and $1-a(\eta)=\eta\lambda(2-\eta\lambda(1+\sigma_1^2))$. Differentiating $m_\infty$ in $\eta$ gives $2\sigma_0^2\lambda/[\lambda(2-\eta\lambda(1+\sigma_1^2))]^2>0$.
\end{proof}

The dissociation is thus specific to the multiplicative regime: an additive component reintroduces a floor that only decay removes, and the two optimizer families agree once more (Table~\ref{tab:diss}). The same objective under the same noise can call for opposite schedules, and the property that decides between them is whether the update self-anneals. One caveat on Table~\ref{tab:diss}: for the anisotropic quadratic under a single shared rate, the constant-schedule optimality for SGD is proven per coordinate and confirmed numerically (Table~\ref{tab:exp}); the termwise argument of Proposition~\ref{prop:sgd}(ii) is scalar.

\begin{remark}[Intermediate noise exponents]\label{rem:alpha}
Real gradient noise need not sit at either endpoint of the model \eqref{eq:noise}; Section~\ref{sec:exp} measures $\Var(\xi\mid x)\propto\|\grad\|^{2\alpha}$ with $\alpha\approx0.56$ on a real task. The dichotomy degrades gracefully. For one coordinate with $\Var(\xi\mid x)=\sigma^2|f'(x)|^{2\alpha}$ and $\alpha\in(0,1)$, Jensen's inequality bounds the noise term by $\eta^2\sigma^2\lambda^{2\alpha}m_t^\alpha$, so $m_{t+1}\le(1-\eta\lambda)^2m_t+\eta^2\sigma^2\lambda^{2\alpha}m_t^\alpha$; the right-hand map is increasing and concave with a unique positive fixed point, whence
\[
\limsup_T\,\E[x_T^2]\;\le\;C(\lambda,\sigma,\alpha)\,\eta^{1/(1-\alpha)} .
\]
This interpolates the two regimes: $\alpha=0$ recovers the additive floor $\Theta(\eta)$ of Proposition~\ref{prop:additive}, and $\alpha\to1$ recovers the vanishing floor of Proposition~\ref{prop:sgd}. Since the signSGD floor is $\Theta(\eta^2)$ for \emph{every} noise law (Prop.~\ref{prop:sign}(i)), the dissociation persists in floor-exponent form for all $\alpha>\tfrac12$, where SGD's residual floor $\eta^{1/(1-\alpha)}=o(\eta^2)$; the measured $\alpha\approx0.56$ gives order $\eta^{2.3}$.
\end{remark}

One limitation is visible already here. In the pure quadratic, the additive-regime optimum within the WSD family is full decay, not an interior cooldown fraction, and no model in this paper produces an interior optimum (Sec.~\ref{sec:scope}).

\section{A Local Theorem under $(L_0,L_1)$-Smoothness}\label{sec:l0l1}

Let $f(x)=\tfrac{\lambda}{2}x^2+\tfrac{c}{4}x^4$ with $c\ge0$, so $f'(x)=\lambda x+cx^3$ and
$|f''(x)|=\lambda+3cx^2\le 4\lambda+3\sqrt{c/\lambda}\,|f'(x)|$:
the objective is $(L_0,L_1)$-smooth in the sense of \citet{zhang2020}, with $L_0=4\lambda$ and $L_1=3\sqrt{c/\lambda}$, but not $L$-smooth for $c>0$. Write $s(x):=f'(x)/x=\lambda+cx^2\ge\lambda$.

\begin{proposition}[SGD: local geometric convergence, no floor]\label{prop:l0l1sgd}
Assume scale-family multiplicative noise with $|Z|\le M$ almost surely and $\sigma_1M\le1$. Fix a basin $B_\rho=\{|x|\le\rho\}$ and let $s_\rho=\lambda+c\rho^2$. If
\[
\eta\;\le\;\min\!\Big\{\frac{2}{s_\rho(1+\sigma_1 M)},\ \frac{1}{s_\rho(1+\sigma_1^2)}\Big\},
\]
then $B_\rho$ is forward-invariant and $\E[x_{t+1}^2\mid x_t]\le r(\lambda,\eta)\,x_t^2$ with $r(\lambda,\eta)=(1-\eta\lambda)^2+\eta^2\sigma_1^2\lambda^2<1$. Hence $\E[x_t^2]\le r(\lambda,\eta)^t x_0^2\to0$ geometrically, with no floor; the rate is governed by the curvature at the minimum, and its infimum over admissible $\eta$ approaches $\sigma_1^2/(1+\sigma_1^2)$ as $\rho\to0$, recovering Proposition~\ref{prop:sgd}.
\end{proposition}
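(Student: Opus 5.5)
The plan is to write the SGD step as a random multiplicative map and reduce everything to the scalar factor of Proposition~\ref{prop:sgd}, with the curvature $\lambda$ replaced by the state-dependent secant slope $s(x_t)$. Since $f'(x)=s(x)\,x$ and $\hat g_t=f'(x_t)(1+\sigma_1Z_t)$ under scale-family multiplicative noise, we have
\[
x_{t+1}=x_t\big(1-\eta\, s(x_t)(1+\sigma_1 Z_t)\big).
\]
The argument then has three steps, taken in order: forward invariance of $B_\rho$, the conditional second-moment bound, and finally iteration together with the $\rho\to0$ limit. Throughout we take $\eta>0$.

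\emph{Forward invariance.} Suppose $|x_t|\le\rho$. Then $\lambda\le s(x_t)\le s_\rho$. Because $|Z|\le M$ and $\sigma_1M\le1$, the factor $1+\sigma_1Z_t$ lies in $[0,\,1+\sigma_1M]$. Hence
\[
0\le \eta\,s(x_t)(1+\sigma_1Z_t)\le \eta\,s_\rho(1+\sigma_1M)\le 2,
\]
where the last inequality is the first constraint on $\eta$. The multiplier $1-\eta s(1+\sigma_1Z)$ therefore lies in $[-1,1]$, so $|x_{t+1}|\le|x_t|\le\rho$ almost surely. By induction every iterate stays in $B_\rho$, and so $s(x_t)\in[\lambda,s_\rho]$ for all $t$.

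\emph{One-step contraction.} Conditioning on $x_t$ and using $\E Z=0$, $\E Z^2=1$ gives
\[
\E[x_{t+1}^2\mid x_t]=x_t^2\,\tilde r(s(x_t)),\qquad \tilde r(s):=(1-\eta s)^2+\eta^2\sigma_1^2s^2 .
\]
Here $\tilde r$ is exactly $r$ from Proposition~\ref{prop:sgd} with curvature $s$. This is the step I expect to be the main obstacle: the factor depends on the state, and I must show it is dominated by its value at $s=\lambda$.

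To do so, view $\tilde r$ as a convex quadratic in $s$. It is minimized at $s^\star=1/(\eta(1+\sigma_1^2))$. The second constraint on $\eta$ says precisely that $s^\star\ge s_\rho$, so $\tilde r$ is nonincreasing on $[\lambda,s_\rho]$. Therefore $\tilde r(s(x_t))\le\tilde r(\lambda)=r(\lambda,\eta)$.

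It remains to check $r(\lambda,\eta)<1$. This follows from $1-r(\lambda,\eta)=\eta\lambda\big(2-\eta\lambda(1+\sigma_1^2)\big)$, which is positive because $\eta\lambda(1+\sigma_1^2)\le \lambda/s_\rho\le1$.

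\emph{Iteration and the limit.} Taking total expectations and inducting gives $\E[x_t^2]\le r(\lambda,\eta)^t x_0^2\to0$ geometrically, with no additive term and hence no floor. For the infimum, note that as $\rho\to0$ we have $s_\rho\to\lambda$. The second constraint then tends to $\eta\le\eta^\star=1/(\lambda(1+\sigma_1^2))$. The first constraint tends to $2/(\lambda(1+\sigma_1M))\ge1/\lambda\ge\eta^\star$ (using $\sigma_1M\le1$), so it is inactive in the limit.

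Since $r(\lambda,\cdot)$ is decreasing on $(0,\eta^\star]$ by Proposition~\ref{prop:sgd}(i), its infimum over the admissible range is attained at the largest admissible $\eta$. That value tends to $r(\lambda,\eta^\star)=\sigma_1^2/(1+\sigma_1^2)$, which recovers Proposition~\ref{prop:sgd}.
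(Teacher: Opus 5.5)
Your proposal is correct and follows the paper's proof essentially step for step: the same multiplicative form $x_{t+1}=x_t\big(1-\eta s(x_t)(1+\sigma_1Z_t)\big)$, the same $[0,2]$ bound on $\eta s(x_t)(1+\sigma_1Z_t)$ for forward invariance, and the same monotonicity of $r(s,\eta)$ in $s$ on $[\lambda,s_\rho]$ under the second step-size constraint (your vertex-location argument is the paper's derivative sign check). Your explicit check that $r(\lambda,\eta)<1$ and your handling of the $\rho\to0$ limit fill in details the paper leaves implicit. In fact, since $1+\sigma_1M\le2$, the first constraint is implied by the second for every $\rho$, not only in the limit.
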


\begin{proof}
The update is $x_{t+1}=x_t\big(1-\eta s(x_t)(1+\sigma_1 Z_t)\big)$. For $|x_t|\le\rho$ and $|Z_t|\le M$, the factor $\eta s(x_t)(1+\sigma_1Z_t)$ lies in $[\,\eta s(x_t)(1-\sigma_1M),\ \eta s_\rho(1+\sigma_1M)\,]\subseteq[0,2]$, using $\sigma_1M\le1$ for the lower end and the step-size condition for the upper end. Hence $|1-\eta s(x_t)(1+\sigma_1Z_t)|\le1$ and $|x_{t+1}|\le|x_t|\le\rho$: the basin is forward-invariant. Next,
\[
\E[x_{t+1}^2\mid x_t]=x_t^2\,r(s(x_t),\eta),
\]
where $r(s,\eta)=1-2\eta s+\eta^2s^2(1+\sigma_1^2)$. For $\eta\le1/(s_\rho(1+\sigma_1^2))$ we have $\partial_s r=-2\eta+2\eta^2s(1+\sigma_1^2)\le0$ on $[\lambda,s_\rho]$, so $r(s(x_t),\eta)\le r(\lambda,\eta)$, and iterating gives the claim. As $\rho\to0$, $s_\rho\to\lambda$ and the admissible range extends to $\eta^\star=1/(\lambda(1+\sigma_1^2))$, where $r(\lambda,\eta^\star)=\sigma_1^2/(1+\sigma_1^2)$.
\end{proof}

The boundedness assumption with $\sigma_1M\le1$ keeps the noisy gradient sign-consistent inside the basin, enabling almost-sure forward invariance; the contraction inequality itself is algebraic. In-basin contraction is \emph{faster} than $r(\lambda,\eta)$ while $|x|$ is large, since $r$ decreases in $s$.

\begin{proposition}[signSGD: floor persists]\label{prop:l0l1sign}
For the quartic family above (any $c\ge0$), signSGD with constant $\eta$, symmetric $Z$, and $x_0\notin\eta\mathbb Z$ satisfies
$\E[f(x_T)]+\E[f(x_{T+1})]\ge\tfrac18\lambda\eta^2$ for every $T$, hence $\limsup_T\E[f(x_T)]\ge\tfrac1{16}\lambda\eta^2$.
\end{proposition}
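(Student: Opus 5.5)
The plan is to reuse the granularity argument of Proposition~\ref{prop:sign}(i) essentially verbatim. Only two ingredients need checking for the quartic: that every step still moves $x$ by exactly $\pm\eta$, and that $f$ is bounded below by its quadratic part.

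\emph{Step size.} For the quartic, $\hat g_t=f'(x_t)(1+\sigma_1 Z_t)=x_t\,s(x_t)(1+\sigma_1Z_t)$ with $s(x_t)=\lambda+cx_t^2>0$. Since $x_0\notin\eta\mathbb Z$ and each step is $\pm\eta$ (by induction), $x_t\in x_0+\eta\mathbb Z$ never equals $0$. Hence $f'(x_t)\neq0$. The event $1+\sigma_1Z_t=0$ has probability zero, as in Lemma~\ref{lem:sign}, assuming $Z$ has no atom at $-1/\sigma_1$; this holds for continuous $Z$, and I would state it as the same tie convention the paper already uses. So $\sign(\hat g_t)\in\{\pm1\}$ almost surely, and $x_{t+1}=x_t\pm\eta$. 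Symmetry of $Z$ is not needed for this part; it enters only in matching the hypotheses of Proposition~\ref{prop:sign}.

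\emph{Pair bound.} As in Proposition~\ref{prop:sign}(i), suppose $|x_t|<\eta/2$. Since $x_t\neq0$, the step $\pm\eta$ lands at distance at least $\eta-|x_t|>\eta/2$ from $0$ whichever sign is taken. So $\max(|x_t|,|x_{t+1}|)\ge\eta/2$ pointwise, and therefore $x_t^2+x_{t+1}^2\ge\eta^2/4$. Because $c\ge0$, we have $f(x)\ge\tfrac{\lambda}{2}x^2$. This gives $f(x_T)+f(x_{T+1})\ge\tfrac{\lambda}{2}\cdot\tfrac{\eta^2}{4}=\tfrac18\lambda\eta^2$ pointwise, and taking expectations gives the displayed bound for every $T$.

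\emph{Limsup.} Write $L:=\limsup_T\E[f(x_T)]$ and suppose $L<\tfrac1{16}\lambda\eta^2$. Then for all large $T$ both $\E[f(x_T)]$ and $\E[f(x_{T+1})]$ are below $\tfrac1{16}\lambda\eta^2+\epsilon$ with $\epsilon$ small. Their sum is then below $\tfrac18\lambda\eta^2$, which contradicts the pair bound. The expectations are finite because $|x_T|\le|x_0|+T\eta$.

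There is no real obstacle here; the argument is deterministic given almost-sure nonzero signs. The only point needing care is the no-atom and tie condition at $1+\sigma_1Z=0$, which I would make explicit. I would also note that, unlike Proposition~\ref{prop:l0l1sgd}, no boundedness of $Z$ or basin restriction is needed, because the lower bound uses only the unit step magnitude. Only the constant $\tfrac18\lambda$ comes from the quadratic part of $f$.
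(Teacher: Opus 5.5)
Your proposal is correct and follows the paper's proof. Because $f'(x)=s(x)x$ with $s>0$, every step is exactly $\pm\eta$ on the lattice $x_0+\eta\mathbb Z$, so the granularity pair bound of Proposition~\ref{prop:sign}(i) carries over verbatim, and $f(x)\ge\tfrac12\lambda x^2$ transfers it to the loss. One wording fix is needed in your limsup step: bounding each term by $\tfrac1{16}\lambda\eta^2+\epsilon$ only caps the sum at $\tfrac18\lambda\eta^2+2\epsilon$, which gives no contradiction. Instead bound each term by $L+\epsilon$ with $L+\epsilon<\tfrac1{16}\lambda\eta^2$, or argue directly that $\tfrac18\lambda\eta^2\le\limsup_T(\E[f(x_T)]+\E[f(x_{T+1})])\le 2L$.
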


\begin{proof}
Since $f'(x)=s(x)x$ with $s>0$, $\sign(\hat g)=\sign(x)\sign(1+\sigma_1Z)$ and every step is exactly $\pm\eta$, so the granularity argument of Proposition~\ref{prop:sign}(i) applies verbatim; then $f(x)\ge\tfrac12\lambda x^2$ pointwise.
\end{proof}

The theorem is genuinely local: outside the basin the $(L_0,L_1)$ curvature makes a fixed large step unstable, and SGD with too large an $\eta$ diverges (Sec.~\ref{sec:exp}). This is precisely the role of warmup; the result is a statement about the post-warmup phase.

\section{Normalized SGD in Higher Dimension}\label{sec:highdim}

Let $f(x)=\tfrac12x^\top\Lambda x$ with $\Lambda=\mathrm{diag}(\lambda_i)\succ0$ and scale-family multiplicative noise \eqref{eq:noise} with $\sigma_0=0$. Consider normalized SGD, $x_{t+1}=x_t-\eta\,\hat g_t/\|\hat g_t\|$ (with $x_{t+1}=x_t$ on the probability-zero event $\hat g_t=0$).

\begin{lemma}[Scale-invariant update geometry]\label{lem:scaleinv}
For $\alpha>0$, replacing $x$ by $\alpha x$ scales $\grad$ and $\xi$ jointly by $\alpha$, so the joint law of the unit vectors $\big(\hat g/\|\hat g\|,\ \grad/\|\grad\|,\ x/\|x\|\big)$ depends on $x$ only through its direction $\hat x=x/\|x\|$. Every angular statistic of the update, in particular $\bar c(\hat x):=\E[\cos\angle(\hat g,x)]$ and $\bar s(\hat x):=\E[\sin^2\angle(\hat g,x)]$, is independent of the radius along a fixed ray.
\end{lemma}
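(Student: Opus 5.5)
The plan is to write the noisy gradient explicitly as a linear function of $x$ times an $x$-independent random factor, and read off the scaling. With $\sigma_0=0$, \eqref{eq:noise} gives $\hat g_i=\lambda_ix_i+\sigma_1\lambda_i|x_i|Z_i=\lambda_i x_i\big(1+\sigma_1\sign(x_i)Z_i\big)$, where the law of $Z$ does not depend on $x$. Replacing $x$ by $\alpha x$ with $\alpha>0$ multiplies $\grad(x)=\Lambda x$ by $\alpha$. Since $|\alpha x_i|=\alpha|x_i|$, it also multiplies $\xi$ by $\alpha$ under the same realization of $Z$, and it leaves $\sign(x_i)$ unchanged. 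So the first step is the pathwise coupling identity
\[
\hat g(\alpha x;Z)=\alpha\,\hat g(x;Z),\qquad \grad(\alpha x)=\alpha\,\grad(x),
\]
holding for every realization of $Z$.

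The second step is to normalize. On the event $\hat g\neq 0$, the map $v\mapsto v/\|v\|$ is invariant under positive scaling, so
\[
\big(\hat g(\alpha x;Z)/\|\hat g(\alpha x;Z)\|,\ \grad(\alpha x)/\|\grad(\alpha x)\|,\ \alpha x/\|\alpha x\|\big)
\]
equals the same triple evaluated at $x$, pointwise in $Z$. Because $Z$ has the same law at $x$ and at $\alpha x$, the joint law of the triple is the same at both points. Choosing $\alpha=1/\|x\|$ shows the law depends only on $\hat x$. The null event $\hat g=0$ has the same $Z$-preimage at $x$ and $\alpha x$, so the convention $x_{t+1}=x_t$ is compatible with the scaling. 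Likewise $\grad\neq0$ whenever $x\neq0$, since $\Lambda\succ0$.

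The last step covers angular statistics. Any quantity such as $\cos\angle(\hat g,x)=\langle \hat g/\|\hat g\|,\,x/\|x\|\rangle$ or $\sin^2\angle(\hat g,x)=1-\cos^2\angle(\hat g,x)$ is a bounded measurable function of the triple. Its expectation is therefore a functional of the triple's joint law and inherits the radius-independence. Boundedness means no moment assumption on $Z$ is needed, consistent with the heavy-tail claims.

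The only point needing care, which I expect to be the main obstacle though it is minor, is the sign issue in the noise. The coordinatewise model uses $|\partial_if|$, which is only positively homogeneous. For this reason the argument is restricted to $\alpha>0$ and relies on the equality $\sign(\alpha x_i)=\sign(x_i)$. Coordinates with $x_i=0$ contribute zero to both $\grad$ and $\xi$ and scale trivially.
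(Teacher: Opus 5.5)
Your proposal is correct and follows the paper's route: the paper's only justification is the observation, built into the statement itself, that $x\mapsto\alpha x$ with $\alpha>0$ scales $\grad$ and $\xi$ jointly by $\alpha$, so the normalized triple is unchanged. Your pathwise identity $\hat g(\alpha x;Z)=\alpha\,\hat g(x;Z)$, together with the $x$-independent law of $Z$ and the bounded-functional step for $\bar c,\bar s$, is a careful, moment-free write-up of that same argument.
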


\begin{proposition}[Normalized-SGD floor, $d>1$]\label{prop:normsgd}
Let $\lambda_{\min}=\min_i\lambda_i$ and $R_t=\|x_t\|$.

(i) \emph{(Granularity.)} The step norm is exactly $\eta$, so $\max(R_t,R_{t+1})\ge\eta/2$ for all $t$, and
$\E[f(x_T)]+\E[f(x_{T+1})]\ge\tfrac18\lambda_{\min}\eta^2$ for every $T$; the loss cannot converge to zero at constant $\eta$.

(ii) \emph{(Equilibrium scale.)} With $\theta_t=\angle(\hat g_t,x_t)$, for $R_t\gg\eta$,
\[
\E[R_{t+1}-R_t\mid x_t]=-\eta\,\bar c(\hat x_t)+\frac{\eta^2\,\bar s(\hat x_t)}{2R_t}+O\!\big(\eta^3/R_t^2\big),
\]
with $\bar c,\bar s$ radius-independent by Lemma~\ref{lem:scaleinv}. If $c_-:=\inf_{\hat x}\bar c(\hat x)>0$, the drift is inward whenever $R_t\gg\eta$, so the radius is driven down to, and equilibrates at, scale $\Theta(\eta)$; combined with (i), $\E[f(x_T)]=\Theta(\eta^2)$ and cooldown is required. For isotropic $\Lambda=\lambda I$, $\grad\parallel x$ and $\theta_t$ is also the angle to the gradient.
\end{proposition}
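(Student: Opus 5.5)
Part (i) is proved by the same granularity argument as Proposition~\ref{prop:sign}(i), transplanted from the lattice to the Euclidean norm. Off the null event $\hat g_t=0$, the update $x_{t+1}-x_t=-\eta\,\hat g_t/\|\hat g_t\|$ has norm exactly $\eta$. The triangle inequality then gives $\eta=\|x_{t+1}-x_t\|\le R_t+R_{t+1}\le 2\max(R_t,R_{t+1})$ pointwise, so $R_t^2+R_{t+1}^2\ge\eta^2/4$. Since $f(x)\ge\tfrac12\lambda_{\min}\|x\|^2$, we get $f(x_T)+f(x_{T+1})\ge\tfrac18\lambda_{\min}\eta^2$ pointwise. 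Taking expectations gives the pair bound and hence $\limsup_T\E[f(x_T)]\ge\tfrac1{16}\lambda_{\min}\eta^2$. No assumption on the noise law enters, apart from $\hat g_t\neq0$ almost surely; under~\eqref{eq:noise} this holds whenever $x_t\neq0$ and $Z$ has no atom making every coordinate factor $1+\sigma_1 Z_i$ vanish.

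For (ii) I will expand the radius exactly. Write $u_t=\hat g_t/\|\hat g_t\|$, so that
\[
R_{t+1}^2=R_t^2-2\eta R_t\cos\theta_t+\eta^2 ,
\]
which is an identity. Hence $R_{t+1}=R_t\sqrt{1-2\epsilon\cos\theta_t+\epsilon^2}$ with $\epsilon=\eta/R_t$. Next, Taylor-expand $\sqrt{1+h}=1+h/2-h^2/8+O(h^3)$ with $h=-2\epsilon\cos\theta+\epsilon^2$. Since $|\cos\theta|\le1$, we have $|h|\le 3\epsilon$, so the remainder is uniformly $O(\epsilon^3)$. Collecting terms,
\[
R_{t+1}-R_t=-\eta\cos\theta_t+\frac{\eta^2}{2R_t}\bigl(1-\cos^2\theta_t\bigr)+O(\eta^3/R_t^2),
\]
and $1-\cos^2\theta_t=\sin^2\theta_t$. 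Conditioning on $x_t$ and applying Lemma~\ref{lem:scaleinv} identifies the conditional means $\bar c(\hat x_t)$ and $\bar s(\hat x_t)$, and shows they do not depend on $R_t$. For the isotropic remark, note that $\grad=\lambda x$ is parallel to $x$.

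The main obstacle is turning the drift formula into the equilibration claim $\E[f(x_T)]=\Theta(\eta^2)$. The lower bound is already supplied by (i). For the upper bound I would use a Lyapunov argument on $V=R^2$ rather than on $R$. Taking expectations in the exact identity gives
\[
\E[R_{t+1}^2\mid x_t]=R_t^2-2\eta R_t\,\bar c(\hat x_t)+\eta^2\le R_t^2-2\eta c_-R_t+\eta^2 .
\]
This is a negative drift of at least $\eta c_-R_t$ whenever $R_t\ge \eta/c_-$, while the increment is bounded by $2\eta R_t+\eta^2$. A standard Foster--Lyapunov or Hajek-type drift argument then bounds $\limsup_t\E[R_t^2]\le C\eta^2/c_-^2$. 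For example, use $\E[R_{t+1}^2]\le\E[R_t^2]-2\eta c_-\E[R_t]+\eta^2$ together with a convexity/tail control relating $\E[R_t]$ to $\E[R_t^2]$; the cleanest route is probably an exponential-moment bound on $R_t$ from the bounded increments. Multiplying by $\lambda_{\max}/2$ gives $O(\eta^2)$, and the scaling $\Theta(\eta^2)$ follows.

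The delicate point is the hypothesis $c_->0$. It must be assumed rather than derived, since for strongly anisotropic $\Lambda$ with large $\sigma_1$ the noisy direction may fail to correlate with $x$ on some rays. I would state it as the operative assumption and note that it holds for isotropic $\Lambda$ with $p>\tfrac12$-type conditions. Because of the scale invariance, it is a purely angular condition checked on the unit sphere.
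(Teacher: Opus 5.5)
Your proof of (i) and of the drift expansion in (ii) follows the paper: the reverse triangle inequality plus $f\ge\frac12\lambda_{\min}\|x\|^2$, then the law of cosines, a Taylor expansion uniform in $\theta_t$, and Lemma~\ref{lem:scaleinv} for radius independence.

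The genuine difference is the equilibration claim. The paper (Appendix C) bounds only the \emph{truncated} drift by $-\eta c_-+\eta^2/(2R_t)$. It explicitly presents (ii) as a leading-order statement and treats (i) as the only rigorous floor. Your Lyapunov function $R^2$ instead rests on the exact identity $\E[R_{t+1}^2\mid x_t]=R_t^2-2\eta R_t\bar c(\hat x_t)+\eta^2$. No remainder term ever appears, so this route can actually deliver the upper half of $\Theta(\eta^2)$.

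The tail control you leave open is unnecessary if time averages suffice, which is the sense in which the paper measures stationary quantities. The argument runs in three steps.
\begin{itemize}
\item Telescoping your inequality gives $\frac1T\sum_{t<T}\E[R_t]\le\frac{\eta}{2c_-}+\frac{R_0^2}{2\eta c_-T}$.
\item Jensen applied to the same identity gives $\E[R_{t+1}-R_t\mid x_t]\le-\eta\bar c(\hat x_t)+\eta^2/(2R_t)$ \emph{exactly}. The reason is that $\big(R-\eta\bar c+\frac{\eta^2}{2R}\big)^2-\big(R^2-2\eta R\bar c+\eta^2\big)=\big(\eta\bar c-\frac{\eta^2}{2R}\big)^2\ge0$, and the base $R-\eta\bar c+\frac{\eta^2}{2R}$ is positive since $|\bar c|\le1$. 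So the paper's leading-order bound in fact holds exactly.
\item Insert this bound and $|R_{t+1}-R_t|\le\eta$ into $R_{t+1}^3\le R_t^3+3R_t^2(R_{t+1}-R_t)+3\eta^2R_t+\eta^3$, then telescope again. This yields $\limsup_T\frac1T\sum_{t<T}\E[R_t^2]\le\eta^2\big(\frac{3}{4c_-^2}+\frac{1}{3c_-}\big)$, which is your $C\eta^2/c_-^2$.
\end{itemize}
For the non-averaged $\limsup_t\E[R_t^2]$, the same exact bound gives drift at most $-\eta c_-/2$ on $\{R_t\ge\eta/c_-\}$. With increments bounded by $\eta$, Hajek's lemma then applies as you suggest.

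Two small points. Your inequality $R_t+R_{t+1}\ge\eta$ already gives $R_t^2+R_{t+1}^2\ge\eta^2/2$, so the pair constant improves from $\frac18$ to $\frac14$. Also, the coordinate factor in $\hat g_i$ is $1+\sigma_1\sign(x_i)Z_i$ rather than $1+\sigma_1Z_i$, though this is immaterial to the argument.
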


\begin{proof}
(i) $R_{t+1}\ge\|\eta\hat g_t/\|\hat g_t\|\|-R_t=\eta-R_t$, so $R_t<\eta/2$ forces $R_{t+1}>\eta/2$; then $R_t^2+R_{t+1}^2\ge\eta^2/4$ and $f(x)\ge\tfrac12\lambda_{\min}\|x\|^2$. (ii) The law of cosines gives $R_{t+1}=\big(R_t^2-2R_t\eta\cos\theta_t+\eta^2\big)^{1/2}$; expanding in $\eta/R_t$ and taking conditional expectations yields the display, and Lemma~\ref{lem:scaleinv} removes the radius dependence of the coefficients.
\end{proof}

This is the $d>1$ form of the mechanism: multiplicative noise makes the \emph{angular} error of the update scale-free, so a fixed-norm step has an inward efficiency that does not improve as the iterate approaches the optimum, and the radius settles at a multiple of the step size.

\begin{remark}[Positivity and anisotropy]\label{rem:cos}
Unbiased noise gives $\E[\langle\hat g,\grad\rangle\mid x]=\|\grad\|^2>0$, but this does not by itself make $\bar c$ positive: normalization is nonlinear, and a $Z$ with sufficiently negative median makes the normalized drift point outward, exactly as $p<\tfrac12$ does in Lemma~\ref{lem:sign} (in $d=1$, $\bar c=2p-1$). For symmetric $Z$ we observe $\bar c>0$ throughout. Anisotropy matters quantitatively: concentrating gradient energy in a few coordinates lowers the radial alignment by an $O(1)$ factor (measured: $0.49$ for a spectrum spanning three decades versus $0.90$ for an isotropic one at $d=20$, $\sigma_1=0.5$; Sec.~\ref{sec:exp}), enlarging the equilibrium radius by the same factor without changing the $\Theta(\eta)$ scaling.
\end{remark}

\section{Momentum}\label{sec:momentum}

Momentum does not change the classification, which turns on whether the \emph{effective} step magnitude shrinks with $\|\grad\|$. Heavy-ball SGD in its averaged form, $m_t=\beta m_{t-1}+(1-\beta)\hat g_t$ with step $-\eta m_t$, has $m_t\to0$ as $\grad\to0$ under multiplicative noise: it still self-anneals. Sign- and RMS-normalized momentum methods (signSGD with momentum, Adam) retain $\Theta(1)$ update magnitude near the optimum under stationary scale-family noise, up to the $\epsilon$-damping and the non-stationary crossover discussed in Section~\ref{sec:exp}, and the floor persists. Cooldown is thus needed exactly when the effective step does not shrink with the gradient.

\section{Heavy-Tailed Noise}\label{sec:heavytail}

The two sides of the dissociation make sharply different moment demands, and the comparison favors the normalized side. The granularity bounds (Props.~\ref{prop:sign}(i), \ref{prop:l0l1sign}, \ref{prop:normsgd}(i)) and the sign reliability of Lemma~\ref{lem:sign} use no moments of $Z$ at all, and $\cos\theta\in[-1,1]$ is bounded, so the signSGD and normalized-SGD floors are $\Theta(\eta^2)$ even under infinite-variance noise; the closed form \eqref{eq:cexact} still applies, with $p$ read off from the noise law. By contrast, the second-moment recursion of Proposition~\ref{prop:sgd} requires $\E[Z^2]<\infty$, e.g.\ Student-$t_\nu$ with $\nu>2$. For $\nu\le2$ the $L^2$ analysis is unavailable, although the iterate can still converge almost surely ($\log|x_t|$ is a random walk whose increments have finite, typically negative mean; verified at $\nu=1.5$ in Sec.~\ref{sec:exp}). The part of the dissociation that is new here is thus robust precisely in the heavy-tailed regime where the classical self-annealing analysis degrades, consistent with the heavy-tail robustness of sign-based methods emphasized by \citet{compagnoni2026}.

\section{Numerical Verification}\label{sec:exp}

All quantitative claims in this paper were checked against discrete-time simulation. Protocol: $\lambda=1$, $\sigma_1=0.5$, Gaussian $Z$, unless stated; stationary quantities are time averages over at least $10^5$ post-burn-in steps across at least $100$ independent chains; complete configurations, seeds, and code are in the supplementary material.

\paragraph{Self-annealing (Prop.~\ref{prop:sgd}).}
The one-step second-moment ratio matches $r(\eta)$ to within $6\times10^{-5}$ across $\eta\in[0.7,0.9]$ ($2\times10^7$ samples per point), with minimum at $\eta^\star=0.8$, $r(\eta^\star)=0.2$; at $\eta^\star$ trajectories decay to numerical underflow with no floor.

\paragraph{signSGD floor (Prop.~\ref{prop:sign}).}
With continuous initialization, the stationary loss scales as $\eta^{2.00\pm0.01}$ over $\eta\in[0.02,0.16]$ for $\sigma_1\in\{0.5,1,2\}$, with measured $c=\E[x^2]/\eta^2=0.38,\,0.90,\,3.24$ against the closed-form values $\bar c(p)=0.382,\,0.906,\,3.243$ from \eqref{eq:cexact}. An $x_0\in\eta\mathbb Z$ would place the absorbing point $0$ on the lattice and give a spuriously low floor, hence the continuous initialization.

\paragraph{Sign reliability is flat (Lemma~\ref{lem:sign}).}
$\Prob(\text{correct sign})$ at $x=1,\,0.1,\,0.01$: $0.977$ at all three scales for Gaussian $Z$ ($\Phi(2)=0.9772$); $0.908$ for Student-$t_2$ ($1-F_{t_2}(-2)=0.9082$); $1.000$ for a standardized lognormal, which is bounded below by $-0.76>-1/\sigma_1$, so $p=1$ exactly. The value of $p$ depends on the noise law; its independence of $x$ does not.

\paragraph{$(L_0,L_1)$ local theorem (Prop.~\ref{prop:l0l1sgd}).}
In-basin ($x_0=0.3$, $\eta=0.3$, $c=1$), the conditional contraction factor rises from $r(s(x_0),\eta)=0.480$ at initialization toward $r(\lambda,\eta)=0.5125$ within roughly $40$ steps, confirming both the transient speedup ($r$ decreasing in $s$) and the asymptotic rate; out of basin ($\eta=0.8$, $x_0=1.5$) the iterate diverges, confirming the locality caveat and the role of warmup.

\paragraph{Additive floor (Prop.~\ref{prop:additive}).}
At $\sigma_0=0.2$, $\eta=0.05$: simulated stationary second moment $1.0323\times10^{-3}$; predicted fixed point $1.0323\times10^{-3}$.

\paragraph{Normalized SGD, $d=20$ (Prop.~\ref{prop:normsgd}).}
With eigenvalues log-spaced from $1$ to $10^{-3}$ and $x\propto\mathbf 1$, the alignment $\E[\cos\angle(\hat g,\grad)]$ equals $0.9131$ and the radial alignment $\E[\cos\angle(\hat g,x)]$ equals $0.4932$, each identical to four digits at radii $1$, $0.1$, and $0.01$ ($2\times10^6$ draws per radius), as Lemma~\ref{lem:scaleinv} requires; the isotropic radial alignment at the same $d,\sigma_1$ is $0.8976$. The stationary loss scales as $\eta^{2.00}$.

\paragraph{Momentum (Sec.~\ref{sec:momentum}).}
At $\eta=0.02$, $\beta=0.9$ under multiplicative noise: heavy-ball SGD reaches mean $\log_{10}x^2\approx-587$ after $2.5\times10^4$ steps and is still descending (no floor), while signSGD-with-momentum stalls at $\E[f]=4.5\times10^{-4}=\Theta(\eta^2)$.

\paragraph{AdamW.}
The floor is not an artifact of the sign surrogate. AdamW ($\beta_1=0.9$, $\beta_2=0.999$, $\epsilon=10^{-12}$) on the quadratic under multiplicative noise has a last-iterate floor scaling as $\eta^{2.00}$ across $\eta\in\{0.005,0.01,0.02,0.04\}$, essentially unchanged by decoupled weight decay ($\eta^{2.01}$ at weight decay $0.01$). The damping constant adds a predictable refinement: once gradients fall below $\epsilon$, the update $\hat m/(\sqrt{\hat v}+\epsilon)\approx\hat m/\epsilon$ becomes gradient-proportional and self-annealing, and the measured exponent steepens to $2.18$ at $\epsilon=10^{-4}$. AdamW therefore needs cooldown in the multiplicative regime, with $\epsilon$ setting the gradient scale below which it crosses over to SGD-like behavior.

\paragraph{Heavy tails.}
Under Cauchy $Z$ (no finite mean), the signSGD floor still scales as $\eta^{2.00}$, and the measured constant matches $\bar c(p)=0.840$ at $p=\tfrac12+\arctan(1/\sigma_1)/\pi\approx0.852$: the closed form is moment-free, as claimed. For SGD under Student-$t_{1.5}$ multiplicative noise at $\eta=0.8$, the median of $|x_{300}|$ is $\approx10^{-135}$ (almost-sure convergence), while the sample mean of $x^2$ is $\approx10^{-193}$, far above the median $\approx10^{-270}$: rare excursions dominate, as expected when the population second moment is infinite.

\begin{table}[t]
\centering
\small
\caption{signSGD floor constant: closed form $\bar c(p)$ of Remark~\ref{rem:const} versus simulation (time-averaged, continuous initialization).}
\label{tab:cp}
\setlength{\tabcolsep}{4.5pt}
\begin{tabular}{lccccc}
\toprule
$p$ & $0.60$ & $0.70$ & $0.80$ & $0.90$ & $0.95$ \\
$\sigma_1$ & $3.95$ & $1.91$ & $1.19$ & $0.78$ & $0.61$ \\
\midrule
$\bar c(p)$ closed form & $12.33$ & $2.958$ & $1.222$ & $0.615$ & $0.451$ \\
$\bar c(p)$ simulation & $12.33$ & $2.955$ & $1.219$ & $0.612$ & $0.448$ \\
\bottomrule
\end{tabular}
\end{table}

\paragraph{Exact floor constant (Remark~\ref{rem:const}).}
Table~\ref{tab:cp} compares the closed form against simulation across the practical range of $p$; agreement is within $0.7\%$, the Monte Carlo error of the time averages.

\begin{figure}[t]
\centering
\includegraphics[width=\columnwidth]{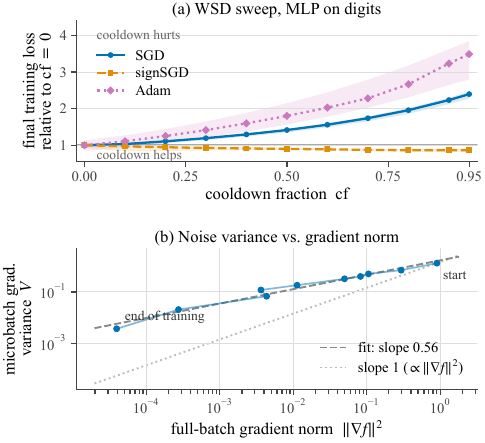}
\caption{Real-data check: two-layer ReLU network on the digits dataset. (a)~Final training loss under WSD schedules, relative to no cooldown (peak rate tuned per point; bands are min--max over five seeds). Cooldown hurts SGD and Adam here and helps only signSGD. (b)~Per-microbatch gradient variance against the full-batch gradient norm along a reference SGD trajectory: the noise collapses together with the gradient (log--log slope $0.56$ against $\|\grad\|^2$; a pure multiplicative model would give slope $1$), so the vanishing-noise regime of Sections~\ref{sec:sgd}--\ref{sec:highdim} is the operative one.}
\label{fig:mlp}
\end{figure}

\paragraph{Dissociation across models (Table~\ref{tab:exp}, Fig.~\ref{fig:diss}).}
We ran full WSD sweeps (cooldown fraction $\cf\in\{0,0.05,\dots,0.95\}$, peak rate tuned per point) on the anisotropic quadratic, the $(L_0,L_1)$ quartic, and a nonconvex deep linear network $y=W_2W_1x$ with a planted orthogonal teacher and multiplicative gradient noise. Under multiplicative noise, cooldown makes SGD strictly worse on the quadratic (from $6.8\times10^{-8}$ at $\cf=0$ to $1.5\times10^{-6}$ at $\cf=0.95$) and is irrelevant on the quartic and the deep linear network, where SGD reaches numerical zero at every $\cf$; signSGD and normalized SGD improve by five to seven orders of magnitude with strong cooldown, with a broad plateau over $\cf\in[0.4,0.95]$. Within such a plateau the argmin is not sharply determined, so Table~\ref{tab:exp} should be read for the contrast between $\cf^\star=0$ and a plateau near full decay rather than for the exact argmin. With $\sigma_0>0$ every method benefits from cooldown. These controlled testbeds confirm the mechanism across convex, $(L_0,L_1)$, and nonconvex landscapes, dimensions, anisotropy, and noise laws.

\paragraph{A real-data check (Fig.~\ref{fig:mlp}).}
We also trained a two-layer ReLU network ($64$--$128$--$10$) on the scikit-learn digits dataset ($1{,}500$ training images, cross-entropy, minibatch $32$), using the same sweep protocol (five seeds) together with a direct measurement of the noise structure. The noise is of the vanishing type: the per-microbatch gradient variance falls by two and a half orders of magnitude over training and tracks $\|\grad\|^2$ with log--log slope $0.56$ (Fig.~\ref{fig:mlp}b). A constant additive floor is rejected, since the late-training variance sits a factor of $45$ below the intercept of a global affine fit; the regime is interpolation-like but not exactly multiplicative, for which the slope would be $1$. Remark~\ref{rem:alpha} covers this intermediate case: the measured exponent leaves SGD a residual floor of order $\eta^{2.3}$, invisible at this horizon, while the sign floor is noise-agnostic. The schedule preferences follow. SGD's optimal cooldown is $0$ and its final loss is $2.4\times$ worse at $\cf=0.95$; signSGD is the only method cooldown helps ($7.7\times10^{-2}$ to $6.6\times10^{-2}$, plateau over $\cf\in[0.8,0.95]$), remaining more than an order of magnitude above SGD throughout (a factor of $28$ without cooldown); Adam again prefers $\cf^\star=0$, ending $3.5\times$ worse under strong cooldown. We read Adam's SGD-like behavior as a non-stationary self-annealing channel that our stationary analysis excludes: near interpolation the gradient collapses faster than the $\sqrt{\hat v}$ average forgets, so $\hat m/(\sqrt{\hat v}+\epsilon)$ shrinks with the gradient rather than staying $\Theta(1)$, the crossover the $\epsilon$ experiment isolates on the quadratic. This is measured, not inferred: Adam's per-step update norm, as a fraction of a full sign step $\eta\sqrt d$, falls from $0.33$ to $0.02$ over training while the gradient norm drops by $500\times$ (technical appendix), so the update is no longer $\Theta(1)$-normalized and the criterion of Section~\ref{sec:momentum} classifies it correctly. We report training loss because the claim is optimization-level, and in single-epoch pretraining, where cooldown originates, it is also the quantity of record; test accuracy is schedule-insensitive here ($0.98$--$0.99$).

\begin{table}[t]
\centering
\footnotesize
\caption{WSD sweeps: optimal cooldown fraction $\cf^\star$, with the cooldown gain $\E f(x_T)|_{\cf=0}\,/\,\min_\cf\E f(x_T)$ in parentheses. Multiplicative noise unless noted. $^\dagger$Loss at numerical zero ($\lesssim10^{-32}$) for every $\cf$; cooldown offers no benefit.}
\label{tab:exp}
\setlength{\tabcolsep}{3pt}
\begin{tabular}{lccc}
\toprule
Model & SGD & signSGD & normalized \\
\midrule
Quadratic & $0.00$ ($1.0$) & $0.95$ ($1.7{\times}10^{6}$) & $0.75$ ($3.8{\times}10^{5}$) \\
Quartic & $0.00^\dagger$ & $0.95$ ($1.5{\times}10^{6}$) & n/a \\
Quadratic, mixed & $0.95$ ($3.5$) & $0.45$ ($1.2$) & $0.95$ ($1.1$) \\
Deep linear & $0.00^\dagger$ & $0.85$ ($5.8{\times}10^{6}$) & n/a \\
\bottomrule
\end{tabular}
\end{table}

\section{Scope, a Diagnostic, and What This Does Not Explain}\label{sec:scope}

\paragraph{Not the interior WSD fraction.}
In every regime studied here, SGD's optimal cooldown is $0$ (multiplicative) or maximal (additive/mixed), and the normalized methods' optima sit on broad plateaus near full decay. Nothing produces the interior cooldown fractions of roughly $0.2$--$0.35$ that work well in real pretraining runs \citep{haegele2024,schaipp2025,belloni2026}. This is informative: stationary landscape-plus-noise models, ours included, appear structurally unable to produce an interior optimum, pointing to non-stationarity, feature learning, or finite-data effects as the source.

\paragraph{A regime diagnostic.}
The dissociation is actionable only near the multiplicative regime. The governing ratio is $\rho(x)=\sigma_0^2/(\sigma_1^2\|\grad(x)\|^2)$: for $\rho\ll1$ the noise is effectively multiplicative, and SGD-type methods need no cooldown while normalized methods do; for $\rho\gtrsim1$ an additive floor dominates and every method benefits. Both $\sigma_0^2$ and $\sigma_1^2$ are estimable online by regressing per-microbatch gradient variance on $\|\hat g\|^2$, as we do in Section~\ref{sec:exp}. Because $(\sigma_0^2,\sigma_1^2)$ drifts over a long trajectory, the regression should be applied locally in time; the operative question is whether the noise variance vanishes with the gradient. Since $\|\grad\|$ shrinks over training, $\rho$ grows late in a run whenever a genuine additive component exists, so cooldown should matter most toward the end, which is where WSD puts it.

\paragraph{What is and is not new.}
The self-annealing half is classical interpolation theory \citep{schmidt2013,vaswani2019}; the contribution is the contrast (Lemma~\ref{lem:sign}, Props.~\ref{prop:sign} and~\ref{prop:normsgd}): convex last-iterate theory explains why cooldown helps (sub)gradient methods under additive-type noise \citep{schaipp2025} but does not separate optimizers at fixed noise. A tight last-iterate analysis of normalized methods under $(L_0,L_1)$-smoothness, turning the dissociation into an optimal cooldown \emph{shape}, remains open.

\section{Related Work}\label{sec:related}

SDE approximations of stochastic optimizers go back to \citet{li2017}, with adaptive-method scaling rules in \citet{malladi2022}. \citet{compagnoni2026} analyze distributed, compressed, and sign SGD under $(L_0,L_1)$-smoothness with the affine variance model we adopt, giving randomly-sampled-iterate guarantees complementary to our last-iterate question. \citet{shulgin2026} derive hyperparameter and schedule scaling laws for modern optimizers from linear-minimization-oracle bounds, and \citet{liwu2026} obtain optimal schedule shapes under functional scaling laws, flagging adaptivity and momentum as open. Empirically, \citet{haegele2024} establish constant-rate-plus-cooldown as a reliable alternative to cosine schedules, \citet{wen2025} view WSD through a river-valley landscape, \citet{belloni2026} document cross-architecture universality, and \citet{schaipp2025} match practical schedules to convex last-iterate bounds. Under additive-type noise, constant-step SGD has an $\eta$-scaled stationary distribution \citep{dieuleveut2020}, the regime of Proposition~\ref{prop:additive}; under multiplicative noise there is no floor to converge to. Sign descent as a surrogate for Adam is supported by \citet{balles2018} and \citet{kunstner2023}, who trace much of Adam's advantage on transformers to sign-like behavior, while schedule-free averaging \citep{defazio2024} removes decay by averaging iterates instead. Interpolation and strong growth are due to \citet{schmidt2013} and \citet{vaswani2019}, signSGD to \citet{bernstein2018}, with non-convergence counterexamples and error feedback in \citet{karimireddy2019} and relaxed-smoothness analyses in \citet{zhang2020} and \citet{crawshaw2022}.

\section{Conclusion}

Whether learning-rate cooldown helps is a joint property of the gradient noise and the optimizer. Under multiplicative noise SGD self-anneals and prefers a constant rate, while sign-based and normalized methods sit on an $\eta^2$ floor that only decay removes; any additive component then restores a floor for every method. The dissociation persists under $(L_0,L_1)$ geometry, in higher dimension, and with momentum and heavy-tailed noise, and we confirm it on a nonconvex network and on real data. What it does not settle, the interior cooldown fraction used at scale, appears to lie beyond stationary landscape-and-noise geometry.

\onecolumn
\section*{Technical Appendix}
Proposition, lemma, remark, equation, table, and figure numbers below refer to the numbered statements in the main text above.

\section*{A. Exact Stationary Law for signSGD (Proposition 2(ii))}\label{app:lattice}

\paragraph{Setup.}
Consider one coordinate with curvature $\lambda>0$, pure multiplicative scale-family noise with symmetric $Z$, constant step $\eta$, and initialization $x_0=(k+\varphi)\eta$ with offset $\varphi\in(0,1)$. The signSGD update is $x_{t+1}=x_t-\eta\,\sign(\hat g_t)$ with $\hat g_t=\lambda x_t(1+\sigma_1Z_t)$, so
\[
x_{t+1}=x_t-\eta\,\sign(x_t)\,B_t,
\]
where $B_t:=\sign(1+\sigma_1Z_t)\in\{\pm1\}$, and by Lemma~1 the variables $B_t$ are i.i.d.\ with $\Prob(B_t=+1)=p$, independent of the iterate. (For symmetric $Z$ the sign reliability is the same on both sides of the optimum, which is the only place symmetry is used.) Every iterate stays on the lattice $x_0+\eta\mathbb Z$, which does not contain $0$; hence $\hat g_t\neq0$ almost surely, every step has magnitude exactly $\eta$, and the absorbing point $0$ is never reached.

\paragraph{The absolute-value chain.}
Write the positive lattice points as
\[
A_j:=(j+\varphi)\eta,\qquad B_j:=(j+1-\varphi)\eta,\qquad j\ge0 ,
\]
where the $A$-ladder collects the values of $|x|$ realized while $x>0$ and the $B$-ladder those realized while $x<0$ (if $x_0>0$; the labeling is symmetric otherwise). A correct step ($B_t=+1$) moves $|x|$ inward, an incorrect one outward:
\begin{gather*}
A_j\to A_{j-1}\ (j\ge1)\quad\text{and}\quad A_0\to B_0
\quad\text{w.p.\ }p,\\
A_j\to A_{j+1}\quad\text{w.p.\ }q:=1-p,
\end{gather*}
and identically on the $B$-ladder. The only communication between the ladders is the bottom edge $A_0\leftrightarrow B_0$: stepping inward from $A_0=\varphi\eta$ crosses the origin and lands at $-(1-\varphi)\eta$, i.e.\ at $B_0$.

\paragraph{Invariant law by detailed balance.}
The transition graph is a tree (two half-lines joined by one edge), and a Markov chain on a tree is reversible, so its invariant law $\pi$ satisfies detailed balance edge by edge:
\begin{gather*}
\pi(A_j)\,q=\pi(A_{j+1})\,p,\qquad
\pi(B_j)\,q=\pi(B_{j+1})\,p,\\
\pi(A_0)\,p=\pi(B_0)\,p .
\end{gather*}
Hence $\pi(A_j)=\pi(B_j)=\kappa\beta^j$ with $\beta:=q/p<1$ for $p>\tfrac12$, and normalization $2\kappa\sum_{j\ge0}\beta^j=1$ gives $\kappa=(1-\beta)/2$.

\paragraph{Second moment.}
Under $\pi$,
\[
\frac{\E_\pi[x^2]}{\eta^2}
=\frac{1-\beta}{2}\sum_{j\ge0}\beta^j\Big[(j+\varphi)^2+(j+1-\varphi)^2\Big].
\]
Expanding $(j+\varphi)^2+(j+1-\varphi)^2=2(j^2+j)+\varphi^2+(1-\varphi)^2$ and using the geometric sums
$\sum_j\beta^j=\tfrac1{1-\beta}$, $\sum_j j\beta^j=\tfrac{\beta}{(1-\beta)^2}$, $\sum_j j^2\beta^j=\tfrac{\beta(1+\beta)}{(1-\beta)^3}$,
\[
\frac{\E_\pi[x^2]}{\eta^2}
=\frac{\varphi^2+(1-\varphi)^2}{2}
+(1-\beta)\Big[\frac{\beta(1+\beta)}{(1-\beta)^3}+\frac{\beta}{(1-\beta)^2}\Big].
\]
Substituting $\beta=q/p$ and $1-\beta=(2p-1)/p$ simplifies the bracket to $q/(2p-1)+q/(2p-1)^2$, which is Equation~(4):
\[
c(\varphi,p)=\frac{\varphi^2+(1-\varphi)^2}{2}+\frac{q}{2p-1}+\frac{q}{(2p-1)^2}.
\]
Since $\varphi^2+(1-\varphi)^2\ge\tfrac12$, we get $c(\varphi,p)\ge\tfrac14$ for all $\varphi,p$, and averaging over $\varphi\sim\mathrm{Unif}[0,1)$ (the offset induced by a continuous initialization) replaces the first term by $\tfrac13$, giving $\bar c(p)$ of Remark~3. At $p=1$ the walk deterministically oscillates between the two bottom states and $c=\big(\varphi^2+(1-\varphi)^2\big)/2$ exactly, consistent with the formula at $q=0$.

\paragraph{Periodicity and why Proposition 2(i) is stated for pairs.}
The two-ladder graph is bipartite, so the chain has period $2$: the law of $x_T$ oscillates between an even and an odd profile, and $\E[x_T^2]$ need not converge, although time averages converge to $\E_\pi[x^2]$ and each parity profile has the same geometric tails. This is why the granularity bound in Proposition~2(i) is stated for consecutive pairs, $\E[f(x_T)]+\E[f(x_{T+1})]\ge\tfrac18\lambda\eta^2$, rather than as a $\liminf$ of the single-iterate loss: for $p$ close to $1$ and offset $\varphi$ close to $0$ or $1$, the single-iterate expectation genuinely dips to $\tfrac12\lambda\min(\varphi,1-\varphi)^2\eta^2$ on one parity class. All floor statements in this paper use the pair form for this reason.

\section*{B. Drift--Diffusion Comparison (Remark 3)}\label{app:dd}

A drift--diffusion (small-step) approximation replaces the walk by the SDE $dx=-\mu\,\sign(x)\,dt+\sigma\,dW$ with per-step drift $\mu=(2p-1)\eta$ and per-step variance $\sigma^2=\E[(\eta B)^2]-\mu^2=4pq\,\eta^2$. Its stationary density is Laplace, $\propto e^{-2\mu|x|/\sigma^2}$, with second moment $\sigma^4/(2\mu^2)$, i.e.
\[
c_{\mathrm{dd}}(p)=\frac{8p^2q^2}{(2p-1)^2}.
\]
Because the step size $\eta$ coincides with the stationary width $\Theta(\eta)$, there is no scale separation and the approximation degrades as $p\to1$: $c_{\mathrm{dd}}\to0$ while the exact $\bar c(p)\to\tfrac13$. Numerically, the ratio $\bar c/c_{\mathrm{dd}}$ is $1.07$, $1.34$, $2.15$, $6.07$, $20.2$ at $p=0.6,0.7,0.8,0.9,0.95$, and $88.0$ at $p=\Phi(2)\approx0.9772$ (the Gaussian case $\sigma_1=0.5$ used throughout this paper).

\section*{C. Expansion Details for Proposition 6(ii)}\label{app:drift}

With $\theta_t=\angle(\hat g_t,x_t)$ and step $-\eta\hat g_t/\|\hat g_t\|$, the law of cosines gives
\[
R_{t+1}=R_t\sqrt{1-2u\cos\theta_t+u^2},\qquad u:=\eta/R_t .
\]
For $u<1$, $\sqrt{1-2u\cos\theta+u^2}=1-u\cos\theta+\tfrac12u^2\sin^2\theta+O(u^3)$ uniformly in $\theta$, so
\[
R_{t+1}-R_t=-\eta\cos\theta_t+\frac{\eta^2\sin^2\theta_t}{2R_t}+O\!\Big(\frac{\eta^3}{R_t^2}\Big).
\]
Taking conditional expectations and applying Lemma~2 (the joint law of the relevant unit vectors depends on $x_t$ only through $\hat x_t$) yields the drift display with radius-independent coefficients $\bar c(\hat x_t)$ and $\bar s(\hat x_t)$. When $c_-=\inf_{\hat x}\bar c(\hat x)>0$, the drift is bounded above by $-\eta c_-+\eta^2/(2R_t)$, which is negative for $R_t>\eta/(2c_-)$; the radius is therefore driven into a band of width $\Theta(\eta)$ and, by the granularity bound of Proposition~6(i), cannot collapse below scale $\eta$ either. The main text presents part~(ii) as a leading-order equilibrium statement; the fully rigorous floor is part~(i), which needs no expansion.

\section*{D. Complete Simulation Configurations}\label{app:config}

Common defaults: curvature $\lambda=1$, noise scale $\sigma_1=0.5$, standard Gaussian $Z$, NumPy \texttt{default\_rng} with fixed seeds recorded in the code. Stationary quantities are time averages after burn-in. All scripts run on CPU in minutes.

\paragraph{One-step SGD ratio (Prop.~1).}
$\E[(1-\eta(1+\sigma_1Z))^2]$ estimated from $2\times10^7$ draws at $\eta\in\{0.70,0.75,0.80,0.85,0.90\}$; maximum absolute deviation from $r(\eta)$ was $6\times10^{-5}$. Long-horizon decay checked at $\eta^\star=0.8$ until floating-point underflow.

\paragraph{signSGD floors (Prop.~2, Table~2).}
Direct Bernoulli-walk simulation for Table~2: $\eta=0.1$, $x_0\sim\mathrm{Unif}(0.5,1.5)$, $400$ chains, $4\times10^5$ steps, burn-in $5\times10^4$. Gaussian-noise floors: $\eta\in\{0.02,0.04,0.08,0.16\}$, $\sigma_1\in\{0.5,1,2\}$, $300$ chains, $3\times10^5$ steps, burn-in $6\times10^4$; exponents from a log-log fit. The absorption artifact was reproduced by setting $x_0=0.3$ with $\eta=0.003$ (so $x_0\in\eta\mathbb Z$): the iterate can then reach the lattice point $0$, where the multiplicative gradient and hence the update vanish.

\paragraph{Sign reliability (Lemma~1).}
$2\times10^6$ draws per law and scale; laws: standard Gaussian, Student-$t_2$ (raw scale), lognormal standardized to zero mean and unit variance, evaluated at $x\in\{1,0.1,0.01\}$.

\paragraph{$(L_0,L_1)$ quartic (Props.~4, 5).}
$f(x)=x^2/2+x^4/4$. In-basin: $x_0=0.3$, $\eta=0.3$, $10^5$ chains; the reported conditional contraction factor is the sample mean of $r(s(x_t),\eta)$. Out-of-basin: $\eta=0.8$, $x_0=1.5$. WSD sweep: $T=2000$, $x_0\sim\mathrm{Unif}(0.25,0.35)$, $4000$ chains $\times\,3$ seeds, peak-rate grids $\{0.1,0.2,0.3,0.5,0.7\}$ (SGD) and $\{0.003,0.01,0.03,0.1\}$ (signSGD).

\paragraph{Additive floor (Prop.~3).}
$\sigma_0=0.2$, $\eta=0.05$, $2000$ chains, $2\times10^5$ steps, burn-in $5\times10^4$.

\paragraph{Normalized SGD in $d=20$ (Prop.~6).}
Spectrum $\lambda_i$ log-spaced from $1$ to $10^{-3}$; direction $x\propto\mathbf 1$; alignments from $2\times10^6$ draws at radii $1,0.1,0.01$; the isotropic comparison uses $\lambda_i\equiv1$. Floor exponent from $\eta\in\{0.02,0.04,0.08\}$, $100$ chains, $1.5\times10^5$ steps, burn-in $3\times10^4$.

\paragraph{Momentum (Sec.~8).}
$\beta=0.9$, $\eta=0.02$. Heavy-ball in the averaged form $m_t=\beta m_{t-1}+(1-\beta)\hat g_t$, $2\times10^4$ chains, $2.5\times10^4$ steps, tracked in log space with periodic rescaling of the joint state $(x_t,m_t)$, which is exact because the dynamics are linear in the state. Sign-with-momentum: step $-\eta\,\sign(m_t)$, $200$ chains, $2\times10^5$ steps, burn-in $5\times10^4$.

\paragraph{AdamW (Sec.~10).}
$\beta_1=0.9$, $\beta_2=0.999$, bias correction on, $\eta\in\{0.005,0.01,0.02,0.04\}$, $200$ chains, $2\times10^5$ steps, burn-in $5\times10^4$; configurations $(\epsilon,\text{wd})\in\{(10^{-12},0),(10^{-12},0.01),(10^{-4},0)\}$ with decoupled weight decay.

\paragraph{Heavy tails (Sec.~9).}
signSGD under standard Cauchy $Z$: same protocol as the Gaussian floors. SGD under Student-$t_{1.5}$: $\eta=0.8$, $2\times10^5$ runs of $300$ steps, evolved as $\log|x_t|$; the reported mean of $x^2$ is the log-sum-exp sample mean.

\paragraph{WSD sweeps (Table~3, Fig.~1).}
Schedule: $\eta_t=\eta$ for $t<(1-\cf)T$, then linear decay to $0$; $\cf\in\{0,0.05,\dots,0.95\}$; the reported loss at each $\cf$ is the minimum over the peak-rate grid of the mean last-iterate loss. For SGD under multiplicative noise this min-over-grid loss increases monotonically in $\cf$ across the entire sweep (Fig.~1a), so $\cf^\star=0$ is not an artifact of grid resolution.
Quadratic: $d=20$, spectrum as above, $x_0=\mathbf 1/\sqrt d$, $T=2000$, $512$ chains per configuration; grids $\{0.1,0.2,0.4,0.8,1.2,1.5\}$ (SGD) and $\{0.003,0.01,0.03,0.1,0.3\}$ (signSGD, normalized SGD); mixed regime adds $\sigma_0=0.2$.
Deep linear network: $y=W_2W_1x$ with $W_1,W_2\in\mathbb R^{10\times10}$, planted orthogonal teacher $W^\star$ (QR factor of a Gaussian matrix), population loss $\tfrac12\|W_2W_1-W^\star\|_F^2$, exact population gradients with coordinatewise multiplicative noise, initialization $I+0.1\,\mathcal N(0,1)^{10\times10}$, $T=4000$, $8$ runs per configuration; grids $\{0.02,0.05,0.1,0.2,0.4\}$ (SGD) and $\{0.001,0.003,0.01,0.03\}$ (signSGD).

\section*{E. Real-Data Experiment (Fig.~2)}\label{app:mlp}

\paragraph{Task and model.}
Scikit-learn \texttt{digits} ($8\times8$ grayscale images, features scaled to $[0,1]$), shuffled with a fixed seed and split $1{,}500$ train / $297$ test. Two-layer ReLU network $64\to128\to10$ (He initialization, biases zero; $9{,}610$ parameters), softmax cross-entropy, minibatches of size $32$ drawn i.i.d.\ with replacement.

\paragraph{WSD sweep.}
$T=4{,}000$ steps; cooldown fractions $\cf\in\{0,0.1,\dots,0.9,0.95\}$; peak-rate grids $\{0.05,0.1,0.2,0.4\}$ (SGD), $\{10^{-4},3\times10^{-4},10^{-3},3\times10^{-3},10^{-2}\}$ (signSGD), $\{3\times10^{-4},10^{-3},3\times10^{-3},10^{-2}\}$ (Adam, $\beta_1=0.9$, $\beta_2=0.999$, $\epsilon=10^{-8}$, bias correction); five seeds controlling both initialization and minibatch order. Reported: mean over seeds of the final full-batch training loss, minimized over the peak-rate grid per $(\text{optimizer},\cf)$; bands in Fig.~2a are min--max over seeds at the selected peak rate. Final test accuracies at the per-optimizer best configuration: $0.991$ (SGD), $0.986$ (signSGD), $0.994$ (Adam); accuracy varies by less than $0.01$ across $\cf$ for every optimizer.

\paragraph{Noise measurement.}
Along a reference run (SGD, constant $\eta=0.1$), at checkpoints $t\in\{0,50,100,200,400,800,1600,3200,6400,12800\}$ we compute the full-batch gradient $G$ and the empirical variance $V=\frac{1}{64}\sum_{b=1}^{64}\|g_b-G\|^2$ over $64$ fresh microbatches of size $32$ (gradients flattened over all parameters). Measured values ($t$, training loss, $\|G\|^2$, $V$): $(0,\,2.46,\,9.0\times10^{-1},\,1.31)$; $(50,\,0.94,\,2.9\times10^{-1},\,0.70)$; $(100,\,0.50,\,1.1\times10^{-1},\,0.50)$; $(200,\,0.28,\,8.2\times10^{-2},\,0.40)$; $(400,\,0.17,\,5.0\times10^{-2},\,0.32)$; $(800,\,0.10,\,1.1\times10^{-2},\,0.18)$; $(1600,\,0.056,\,3.6\times10^{-3},\,0.12)$; $(3200,\,0.028,\,4.3\times10^{-3},\,6.8\times10^{-2})$; $(6400,\,0.011,\,2.7\times10^{-4},\,2.1\times10^{-2})$; $(12800,\,4.3\times10^{-3},\,3.9\times10^{-5},\,3.7\times10^{-3})$. The log--log regression of $V$ on $\|G\|^2$ has slope $0.56$. A global affine fit $V=a+b\|G\|^2$ gives $a=0.166$, $b=1.36$; the final measured $V$ is a factor of $45$ below that intercept, which is how a constant additive floor is rejected: the affine model with trajectory-constant $(\sigma_0^2,\sigma_1^2)$ is only locally valid, and the operative fact is that the noise variance collapses together with the gradient.

\paragraph{Adam update-magnitude measurement.}
To test the explanation offered in the main text for Adam's SGD-like schedule preference, we track the per-step update norm $\|\theta_{t+1}-\theta_t\|$ as a fraction of a full sign step $\eta\sqrt d$ ($d=9{,}610$ parameters), averaged over a $10$-step window, along an Adam run at constant $\eta=3\times10^{-3}$ (the peak rate selected by the sweep at $\cf=0$), together with the full-batch gradient norm. Measured values ($t$, fraction, $\|\grad\|$): $(50,\,0.33,\,0.58)$; $(100,\,0.22,\,0.28)$; $(200,\,0.14,\,0.15)$; $(400,\,0.12,\,0.18)$; $(800,\,0.12,\,0.087)$; $(1600,\,0.074,\,0.038)$; $(3200,\,0.032,\,0.0050)$; $(4000,\,0.022,\,0.0012)$. At $\eta=10^{-3}$ the fraction falls from $0.43$ to $0.07$ over the same horizon. A $\Theta(1)$-normalized method would hold this fraction roughly constant; instead it declines by an order of magnitude as the gradient collapses, confirming that on this task the $\sqrt{\hat v}$ memory converts Adam into an effectively gradient-proportional, self-annealing update.

\section*{F. Code}\label{app:code}

The \texttt{code/} directory is a self-contained package (NumPy/SciPy/Matplotlib/scikit-learn) with a one-command entry point, \texttt{run\_all.sh}, that regenerates every result and figure into \texttt{results/}. Shared utilities live in \texttt{common.py} (deterministic seeding and the WSD schedule) and \texttt{mlp.py} (the two-layer network); all randomness is seeded deterministically, so every number reproduces on rerun. The experiment scripts are \texttt{verify\_claims.py} (all stationary checks, exponents, and constants), \texttt{wsd\_sweeps.py} (quadratic and deep-linear WSD sweeps), \texttt{quartic\_wsd.py} (quartic sweep and the one-step ratio check), \texttt{mlp\_realdata.py} (real-data experiment of Fig.~2), and \texttt{adam\_step\_norm.py} (the Adam update-magnitude measurement above); \texttt{make\_figure1.py} and \texttt{make\_figure2.py} render the figures from the saved outputs. Total runtime is under forty minutes on a laptop CPU.

\bibliography{references}

@misc{belloni2026,
  author    = {Belloni, A. and Noci, L. and Orvieto, A.},
  title     = {Universal Dynamics of Warmup Stable Decay: Understanding {WSD} beyond Transformers},
  year      = {2026},
  note      = {ICML 2025 Workshops on High-dimensional Learning Dynamics (HiLD) and Methods and Opportunities at Small Scale (MOSS); arXiv:2601.09000}
}

@inproceedings{bernstein2018,
  author    = {Bernstein, J. and Wang, Y.-X. and Azizzadenesheli, K. and Anandkumar, A.},
  title     = {{signSGD}: Compressed Optimisation for Non-Convex Problems},
  booktitle = {Proceedings of the 35th International Conference on Machine Learning (ICML)},
  year      = {2018}
}

@inproceedings{compagnoni2026,
  author    = {Monzio Compagnoni, E. and Islamov, R. and Proske, F. N. and Lucchi, A. and Orvieto, A. and Gorbunov, E.},
  title     = {On the Interaction of Batch Noise, Adaptivity, and Compression, under $(L_0,L_1)$-Smoothness: An {SDE} Approach},
  booktitle = {Proceedings of the 43rd International Conference on Machine Learning (ICML)},
  year      = {2026},
  note      = {arXiv:2506.00181}
}

@inproceedings{balles2018,
  author    = {Balles, L. and Hennig, P.},
  title     = {Dissecting {Adam}: The Sign, Magnitude and Variance of Stochastic Gradients},
  booktitle = {Proceedings of the 35th International Conference on Machine Learning (ICML)},
  year      = {2018}
}

@inproceedings{crawshaw2022,
  author    = {Crawshaw, M. and Liu, M. and Orabona, F. and Zhang, W. and Zhuang, Z.},
  title     = {Robustness to Unbounded Smoothness of Generalized {SignSGD}},
  booktitle = {Advances in Neural Information Processing Systems (NeurIPS)},
  year      = {2022}
}

@article{dieuleveut2020,
  author    = {Dieuleveut, A. and Durmus, A. and Bach, F.},
  title     = {Bridging the Gap between Constant Step Size Stochastic Gradient Descent and {Markov} Chains},
  journal   = {Annals of Statistics},
  volume    = {48},
  number    = {3},
  year      = {2020}
}

@inproceedings{defazio2024,
  author    = {Defazio, A. and Yang, X. A. and Mehta, H. and Mishchenko, K. and Khaled, A. and Cutkosky, A.},
  title     = {The Road Less Scheduled},
  booktitle = {Advances in Neural Information Processing Systems (NeurIPS)},
  year      = {2024}
}

@article{hu2024minicpm,
  author    = {Hu, S. and Tu, Y. and Han, X. and He, C. and Cui, G. and others},
  title     = {{MiniCPM}: Unveiling the Potential of Small Language Models with Scalable Training Strategies},
  journal   = {arXiv preprint arXiv:2404.06395},
  year      = {2024}
}

@inproceedings{shamir2013,
  author    = {Shamir, O. and Zhang, T.},
  title     = {Stochastic Gradient Descent for Non-smooth Optimization: Convergence Results and Optimal Averaging Schemes},
  booktitle = {Proceedings of the 30th International Conference on Machine Learning (ICML)},
  year      = {2013}
}

@inproceedings{haegele2024,
  author    = {H{\"a}gele, A. and Bakouch, E. and Kosson, A. and Ben Allal, L. and von Werra, L. and Jaggi, M.},
  title     = {Scaling Laws and Compute-Optimal Training Beyond Fixed Training Durations},
  booktitle = {Advances in Neural Information Processing Systems (NeurIPS)},
  year      = {2024}
}

@inproceedings{karimireddy2019,
  author    = {Karimireddy, S. P. and Rebjock, Q. and Stich, S. U. and Jaggi, M.},
  title     = {Error Feedback Fixes {SignSGD} and other Gradient Compression Schemes},
  booktitle = {Proceedings of the 36th International Conference on Machine Learning (ICML)},
  year      = {2019}
}

@inproceedings{kunstner2023,
  author    = {Kunstner, F. and Chen, J. and Lavington, J. W. and Schmidt, M.},
  title     = {Noise Is Not the Main Factor Behind the Gap Between {SGD} and {Adam} on Transformers, but Sign Descent Might Be},
  booktitle = {International Conference on Learning Representations (ICLR)},
  year      = {2023}
}

@inproceedings{li2017,
  author    = {Li, Q. and Tai, C. and E, W.},
  title     = {Stochastic Modified Equations and Adaptive Stochastic Gradient Algorithms},
  booktitle = {Proceedings of the 34th International Conference on Machine Learning (ICML)},
  year      = {2017}
}

@inproceedings{liwu2026,
  author    = {Li, B. and Wang, Z. and Chen, F. and Zhao, S. and Zheng, R. and Wu, L.},
  title     = {Optimal Learning-Rate Schedules under Functional Scaling Laws: Power Decay and Warmup--Stable--Decay},
  booktitle = {Proceedings of the 39th Conference on Learning Theory (COLT)},
  year      = {2026},
  note      = {arXiv:2602.06797}
}

@inproceedings{malladi2022,
  author    = {Malladi, S. and Lyu, K. and Panigrahi, A. and Arora, S.},
  title     = {On the {SDEs} and Scaling Rules for Adaptive Gradient Algorithms},
  booktitle = {Advances in Neural Information Processing Systems (NeurIPS)},
  year      = {2022}
}

@article{schaipp2025,
  author    = {Schaipp, F. and H{\"a}gele, A. and Taylor, A. and Simsekli, U. and Bach, F.},
  title     = {The Surprising Agreement Between Convex Optimization Theory and Learning-Rate Scheduling for Large Model Training},
  journal   = {arXiv preprint arXiv:2501.18965},
  year      = {2025}
}

@article{schmidt2013,
  author    = {Schmidt, M. and Le Roux, N.},
  title     = {Fast Convergence of Stochastic Gradient Descent under a Strong Growth Condition},
  journal   = {arXiv preprint arXiv:1308.6370},
  year      = {2013}
}

@article{shulgin2026,
  author    = {Shulgin, E. and von R{\"u}tte, D. and Zhang, T. H. and Ajroldi, N. and Sch{\"o}lkopf, B. and Orvieto, A.},
  title     = {Deriving Hyperparameter Scaling Laws via Modern Optimization Theory},
  journal   = {arXiv preprint arXiv:2603.15958},
  year      = {2026}
}

@inproceedings{vaswani2019,
  author    = {Vaswani, S. and Bach, F. and Schmidt, M.},
  title     = {Fast and Faster Convergence of {SGD} for Over-Parameterized Models and an Accelerated Perceptron},
  booktitle = {Proceedings of the 22nd International Conference on Artificial Intelligence and Statistics (AISTATS)},
  year      = {2019}
}

@inproceedings{wen2025,
  author    = {Wen, K. and Li, Z. and Wang, J. and Hall, D. and Liang, P. and Ma, T.},
  title     = {Understanding Warmup--Stable--Decay Learning Rates: A River Valley Loss Landscape View},
  booktitle = {International Conference on Learning Representations (ICLR)},
  year      = {2025}
}

@inproceedings{zhang2020,
  author    = {Zhang, J. and He, T. and Sra, S. and Jadbabaie, A.},
  title     = {Why Gradient Clipping Accelerates Training: A Theoretical Justification for Adaptivity},
  booktitle = {International Conference on Learning Representations (ICLR)},
  year      = {2020}
}

\end{document}